\documentclass{article}

\usepackage{hyperref}
\usepackage[dvipsnames]{xcolor}
 \hypersetup{
    colorlinks=true,
    linkcolor=MidnightBlue,
    citecolor=Green
 }

\usepackage[preprint]{neurips_2026}

\usepackage[utf8]{inputenc} %
\usepackage[T1]{fontenc}    %
\usepackage{url}            %
\usepackage{booktabs}       %
\usepackage{amsfonts}       %
\usepackage{nicefrac}       %
\usepackage{microtype}      %

\usepackage{amsmath}
\usepackage{amssymb}
\usepackage{mathtools}
\usepackage{amsthm}

\usepackage{enumitem}

\usepackage[capitalize,noabbrev]{cleveref}

\usepackage{crossreftools}
\usepackage{xargs} 
\usepackage{xstring}
\usepackage{etoolbox}

\usepackage{etoc}
\usepackage[bibliography=common]{apxproof}
\usepackage{aliascnt}

\renewcommand*{\appendixprelim}{}

\newtheoremrep{thm}{Theorem}[section]
\newtheoremrep{cor}[thm]{Corollary}
\newtheoremrep{lemma}[thm]{Lemma}
\newtheoremrep{fact}[thm]{Fact}
\newtheoremrep{prop}[thm]{Proposition}

\newaliascnt{assumption}{thm}
\newtheorem{assumption}[assumption]{Assumption}
\aliascntresetthe{assumption}

\theoremstyle{remark}
\newtheorem{remark}[thm]{Remark}

\crefname{thm}{theorem}{theorems}
\crefname{assumption}{assumption}{assumptions}
\Crefname{assumption}{Assumption}{Assumptions}
\crefname{cor}{corollary}{corollaries}
\Crefname{cor}{Corollary}{Corollaries}
\crefname{prop}{proposition}{propositions}
\Crefname{prop}{Proposition}{Propositions}
\crefname{lemma}{lemma}{lemmas}
\Crefname{lemma}{Lemma}{Lemmas}
\crefname{fact}{fact}{facts}

\usepackage{mdframed}
\newmdtheoremenv{algo}{Algorithm}
\newmdtheoremenv{procedure}{Decision process}

\newlist{assnum}{enumerate}{1} %
\setlist[assnum]{label=(\roman*), ref=\theassumption(\roman*)}
\crefalias{assnumi}{assumption} 

\newlist{lemnum}{enumerate}{1} %
\setlist[lemnum]{label=(\roman*), ref=\thelemma(\roman*)}
\crefalias{lemnumi}{lemma} 

\newlist{thmnum}{enumerate}{1} %
\setlist[thmnum]{label=(\roman*), ref=\thethm(\roman*)}
\crefalias{thmnumi}{thm} 

\newlist{cornum}{enumerate}{1} %
\setlist[cornum]{label=(\roman*), ref=\thecor(\roman*)}
\crefalias{cornumi}{cor} 

\newlist{definitionnum}{enumerate}{1} %
\setlist[definitionnum]{label=(\roman*), ref=\thedefinition(\roman*)}
\crefalias{definitionnumi}{definition} 

\newlist{propnum}{enumerate}{1} %
\setlist[propnum]{label=(\roman*), ref=\theproposition(\roman*)}
\crefalias{propnumi}{prop}

\newlist{examplenum}{enumerate}{1} %
\setlist[examplenum]{label=(\roman*), ref=\theexample(\roman*)}
\crefalias{examplenumi}{example} 

\newcommand\numberthis{\addtocounter{equation}{1}\tag{\theequation}}		%

\usepackage{dsfont}
\usepackage{nicefrac}

\DeclareMathOperator*{\argmax}{arg\,max}
\DeclareMathOperator*{\argmin}{arg\,min}

\newcommand{\R}{\mathbb{R}}

\newcommand{\lmo}{\operatorname{lmo}}
\newcommand{\sign}{\operatorname{sign}}

\newcommand{\norm}[1]{\left\Vert{#1}\right\Vert}

\newcommand{\RMS}{\mathrm{RMS}}

\usepackage{tcolorbox}
\newtcolorbox{defbox}{colback=black!5!white,colframe=black!75!black}
\newtcolorbox{asmbox}{colback=black!5!white,colframe=black!75!black}
\newtcolorbox{thmbox}{colback=red!5!white,colframe=red!75!black}

\usepackage{braket}

\newcommandx{\QC}[2][1={},2={}]{\ifstrempty{#1}{Q#2}{Q_{#1}\ifstrempty{#2}{}{(#2)}}}
\newcommandx{\PC}[2][1={},2={}]{\ifstrempty{#1}{P#2}{P_{#1}\ifstrempty{#2}{}{(#2)}}}
\newcommandx{\HC}[2][1={},2={}]{\ifstrempty{#1}{H#2}{H_{#1}\ifstrempty{#2}{}{(#2)}}}
\newcommandx{\MC}[2][1={},2={}]{\ifstrempty{#1}{M#2}{M_{#1}\ifstrempty{#2}{}{(#2)}}}

\newcommandx{\EF}[2][1={k},2={}]{\mathbb E\ifstrempty{#1}{}{_{#1}}#2}

\usepackage{algorithm}
\usepackage[noend]{algpseudocode}
\newcommand{\algfont}[1]{\textbf{#1}}

\usepackage{adjustbox}

\usepackage{array,multirow,hhline,graphicx}
\usepackage{wrapfig}
\usepackage{colortbl}

\usepackage{pifont}
\newcommand{\xmark}{\ding{55}}

\usepackage[para,online,flushleft]{threeparttable}

\title{Optimistic Dual Averaging Unifies Modern Optimizers}

\author{%
  Thomas Pethick \\
  Independent Researcher \\
  \texttt{tmpethick@gmail.com}
  \And
  Wanyun Xie \\
  EPFL (LIONS)\\
  \texttt{wanyun.xie@epfl.ch}
  \AND
  Roman Machacek \\
  University of Bern\\
  \texttt{roman.machacek@unibe.ch}
  \And
  Volkan Cevher \\
  EPFL (LIONS) \\
  \texttt{volkan.cevher@epfl.ch}
}

\begin{document}

\maketitle 

\begin{abstract}
We introduce SODA, a generalization of Optimistic Dual Averaging, which provides a common perspective on state-of-the-art optimizers like Muon, Lion, AdEMAMix and NAdam, showing that they can all be viewed as optimistic instances of this framework.
Based on this framing, we propose a practical SODA wrapper for any base optimizer that eliminates weight decay tuning through a theoretically-grounded $1/k$ decay schedule. 
Empirical results across various scales and training horizons show that SODA consistently improves performance without any additional hyperparameter tuning.

\end{abstract}

\etocdepthtag.toc{mtchapter}
\etocsettagdepth{mtchapter}{subsection}
\etocsettagdepth{mtappendix}{none}

\section{Introduction}

Deep learning optimization has developed along two largely complementary axes.
The first axis is \emph{geometry and adaptation}: designing updates whose
implicit norm, preconditioner, or constraint set reflects the structure of
modern models, potentially improving scaling in high dimensions.
Adam \citep{kingma2014adam} is a central example, coupling gradient averaging
with an $\ell_\infty$-type geometry through elementwise normalization.
In parallel, \emph{spectral} geometry and stochastic dualization were developed
in Stochastic Spectral Descent (SSD)
\citep{carlson2015stochastic,carlson2016stochastic}, and are now resurfacing
in recent methods such as Muon and Scion
\citep{jordan2024muon,pethick2025trainingdeeplearningmodels}.

The second axis is \emph{composition of ingredients}: how gradient feedback and iterates are combined through momentum, averaging, and schedules to obtain a stable and performant training recipe.
Algorithms such as Lion \citep{chen2023symbolic} and practical
recipes such as the schedule-free wrapper \citep{defazio2024road} highlight that the assembly of these ingredients can be as important
as the ingredients themselves.
A striking pattern over the last decade is that many seemingly disparate new optimizers can be explained as different points in this 2D design space.

A useful lens for both axes is the classical dualization framework, where the next iterate is obtained by minimizing a linear surrogate regularized by a geometry-inducing term:
\begin{equation}
\label{eq:gen_form}
\textstyle x^{k+1} \in \argmin_{x \in \mathcal{X}} \gamma_k \langle d^{k}, x \rangle + h_k(x).
\end{equation}
Here, $\gamma_k>0$ is a stepsize schedule, $d^k$ is a (possibly averaged) gradient feedback, and $h_k$ is a (possibly time-varying) regularizer/mirror map that determines the geometry.

For instance, choosing $h_k(x)=\tfrac{1}{2}\|x-x^k\|_2^2$ recovers gradient descent, while non-Euclidean choices of $h_k$ yield normalized and geometry-aware updates, that can depend more favorably on the dimensionality of the problem.
In parallel, \emph{how} we form $d^k$ (momentum, averaging, extrapolation) and \emph{how} we average iterates (primal averaging / schedule-free) 
have a strong effect on the properties of the resulting method, including noise robustness, acceleration, and anytime guarantees.

One major challenge is how to set hyperparameters of these optimizers,
which becomes particularly important as model size becomes prohibitively expensive to tune and we instead seek predictable scaling rules.

One particularly challenging parameter is weight decay.
In the context of multi-epoch training, weight decay has a precise characterization, since it constrains the norm of the iterates, thus acting as a regularizer \citep{xie2024implicit,pethick2025trainingdeeplearningmodels}.
However, even in single-epoch training, where overfitting is not a concern, weight decay can surprisingly still be beneficial.

Very recently a scaling rule was developed choosing weight decay as $1/d$ with model dimension $d$ \citep{xiao2024rethinking,qiu2025hyperparameter}.
However, since many large-scale experiments follow compute-optimal training regimes where $d$ and the training horizon $n$ are coupled, it is difficult to disentangle whether the effective dependence is on $d$ or on time.
In addition, weight decay typically needs to be tuned on a per-optimizer basis, as observed for Lion, which requires a significantly larger weight decay than AdamW \citep{chen2023symbolic}. 
It raises the following question:

\begin{center}
\emph{Is it possible to ground these two axes of recent progress in classical methods \\ and provide guidance on hyperparameters?}
\end{center}

\begin{table*}[!t]
\centering
\caption{Instances of \ref{eq:SODA} for $\bar\lambda_k=0$ (\textit{c.f.} \Cref{sec:method}).}
\label{tbl:soda}
\resizebox{\textwidth}{!}{
\scalebox{1.0}{
\begin{threeparttable}
\begin{tabular}{|l|c|c|c|c|}
\hline
\textbf{Method} & \textbf{Reference} & \textbf{Grad. avg ($\alpha_k<1$)} & \textbf{Optimism ($\bar\alpha_k>0$)} & \textbf{Geometry ($h_k$)} \\
\hline
\hline

{Adam} &  \cite{kingma2014adam}
& $\checkmark$
& \xmark
& Smoothed $\ell_\infty$  \\
\hline
{NAdam}\tnote{1} &  \cite{dozat2016incorporating}
& $\checkmark$
& $\checkmark$
& Smoothed $\ell_\infty$  \\
\hline
\hline

Stoch. $\ell_\infty$ &   \cite{carlson2015stochastic,carlson2016stochastic}
& \xmark
& \xmark
& $\ell_\infty$ \\
\hline
Signum  &   \cite{bernstein2018signsgd}
& $\checkmark$
& \xmark
& $\ell_\infty$ \\
\hline
Lion  &   \cite{chen2023symbolic}
& $\checkmark$
& $\checkmark$
& $\ell_\infty$ \\
\hline
\hline

{Shampoo} &  {\cite{Gupta2018ShampooPS}}
& \xmark
& \xmark
&  Smoothed Spectral  \\
\hline
\hline
\shortstack{SSD} &   {\cite{carlson2015stochastic,carlson2016stochastic}}
& \xmark
& \xmark
& Spectral  \\
\hline
Scion {\footnotesize (Spectral)} &  { \cite{pethick2025trainingdeeplearningmodels}}
& $\checkmark$
& \xmark
& Spectral \\
\hline
Muon &  { \cite{jordan2024muon}}
& $\checkmark$
& $\checkmark$
& Spectral \\
\hline
\hline
Multi-Norm &  { \cite{scetbon2025gradient}}
& \xmark
& \xmark
& Doubly Stochastic \\
\hline
\end{tabular}
\begin{tablenotes}
\item[1] Rediscovered as Simplified-AdEMAMix \citep{morwani2025connections}.
\end{tablenotes}
\end{threeparttable}
}
}
\end{table*}

In this paper, we contend that the two axes of algorithmic development are not separate threads.
To this end, we introduce \ref{eq:SODA}, a generalization of Optimistic Dual Averaging \citep{rakhlin2013online} that explicitly couples \emph{dual processing} (gradient averaging + optimism) and \emph{primal processing} (iterate averaging + primal extrapolation). %
The resulting framework provides a single template that recovers several widely used optimizers as special cases and makes it straightforward to inject non-Euclidean geometry, such as $\ell_\infty$ or spectral geometry, into modern training recipes.

In particular, when one of the averaging parameters is simplified (our ``modernized'' regime), SODA yields a practical wrapper around any base optimizer that eliminates weight decay tuning via a theoretically grounded $1/k$ decay. %

\vspace{-3mm}
\paragraph{Contributions.}
Our contributions are as follows:
\vspace{-3mm}
\begin{enumerate}[itemsep=0pt,label=($\roman*$),leftmargin=30pt]
    \item \textit{Unification via optimism and dualization:}
    We show that \ref{eq:SODA} provides a unified perspective on several state-of-the-art optimizers.
    Most notably, it captures Muon, Lion-$\mathcal{K}$, and NAdam as \emph{optimistic} instances within this broad framework (\textit{cf.} \Cref{tbl:soda}).
    \item \textit{Theoretical guarantees:}
    The \ref{eq:SODA} framing allows us to theoretically derive hyperparameter choices that works remarkably well in practice. %
    Our formulation provides a new perspective on weight decay, not only as a regularizer, but as a form of primal averaging. 
    This results in a new update rule that anchors each update at the initialization and induces a $1/k$ weight decay schedule. 
    \item \textit{A simple wrapper:}
    We propose a practical wrapper around any base optimizer (e.g., Adam, Lion, Muon, Scion) that introduces no new hyperparameters and removes the need to tune a weight decay parameter. %
    \item \textit{Empirical results without additional tuning:}
    We demonstrate consistent performance improvements when wrapping Adam, Muon, and Scion across various model sizes and training horizons, even outperforming baselines with tuned weight decay.
    Our proposed instantiation of our framework, SODA$^\dag$, removes the need to tune weight decay while leading to better performance at scale.
\end{enumerate}

\paragraph{Limitations}
This work focuses on the single-epoch setting. 
For multi-epoch training it might be necessary to have a hyperparameter controlling the regularization strength, due to overfitting.

\begin{toappendix}
\section{Preliminaries}\label{app:prelim}
\end{toappendix}
\section{Preliminaries}\label{sec:prelim}
\nosectionappendix
We are interested in the following minimization problem
\begin{equation}\label{eq:stochastic_problem}
\min_{x \in \mathcal X} f(x) := \mathbb E_\xi [f(x, \xi)],
\end{equation}
which covers a host of machine learning problems, where $\xi$ represents a data sample, $x$ is the parameter of the model being optimized, and $f(x, \xi)$ is a loss function.    

\paragraph{Dual averaging and Fenchel conjugate}
The Dual Averaging framework \citep{nesterov2009primal} and its variants typically rely on the Fenchel conjugate to map gradient information back to the primal space. The Fenchel conjugate of a function $h: \mathcal X \to \R \cup \{\infty\}$ is defined as:
\begin{equation*}
h^*(d) = \sup_{x \in \mathcal X} \left\{ \langle d, x \rangle - h(x) \right\},
\end{equation*}
where $d \in \mathcal X^*$. 
The subdifferential $\partial h^*$ is equivalent to the set of maximizers of this conjugate operation:
\begin{equation*}
\partial h^*(d) = \operatorname*{argmax}_{x \in \mathcal X} \left\{ \langle d, x \rangle - h(x) \right\}.
\end{equation*}
This identity is a direct consequence of the Fenchel-Young inequality \citep{bauschke2012fenchel}. 
The general optimization template \eqref{eq:gen_form} can then be written compactly as:
\begin{equation}\label{eq:dual_update_template}
x^{k+1} \in \partial h_k^*(-\gamma_k d^k),
\end{equation}
where $d^k$ is an average (or momentum) of past stochastic gradients, $\gamma_k$ is a stepsize schedule, and $h_k$ is a sequence of geometry defining regularizers (or mirror maps).

\vspace{-3mm}
\paragraph{Standard regularizers}
Different choices of the regularizer $h(x)$ recover well-known optimization steps. We highlight three instances that are particularly relevant for our work, especially when the constraint set is the norm ball $\mathcal{D}=\{x \in \mathcal X \mid \|x\|\leq 1\}$ for some arbitrary norm $\|\cdot\|$:\vspace{-2mm}
\begin{enumerate}[itemsep=0pt, label=($\roman*$)]
    \item \textit{Unconstrained:} Let $h(x) = \frac{1}{2}\|x\|^2$. Then, the $\partial h^*$ is the sharp operator \citep{nesterov2012efficiency}:
    \begin{equation*}
    \begin{split}
        \partial h^*(d) = [d]^\sharp := \argmax_{x} \left\{ \braket{d,x} - \tfrac{1}{2}\|x\|^2 \right\}.
    \end{split}
    \end{equation*}
    \vspace{-5mm}
    \item \textit{Linear minimization oracle ($\lmo$):} Let $h(x) = \iota_{\mathcal D}(x)$ be the indicator function of a set $\mathcal D$ (i.e., $0$ if $x \in \mathcal D$ and $\infty$ otherwise). The $\partial h^*$ is then the $\lmo$:
    \begin{equation*}
        \begin{split}
            \partial h^*(-d) = \lmo_{\mathcal D}(d) := \argmin_{x \in \mathcal D} \braket{d,x}.
        \end{split}
    \end{equation*}
    This is the core operation in Frank-Wolfe (conditional gradient) methods \citep{frank1956algorithm,ken-fw,jaggi2013revisiting}, which are projection-free but typically require $f$ to be smooth.
    \item \textit{Clipping:} By combining the above, we can handle a non-smooth $f$ via ``clipping.'' Let $h(x) = \frac{1}{2}\|x\|^2 + \iota_{\mathcal D}(x)$. Then, we have 
    \begin{equation*}
    \begin{split}
        \partial h^*(-d) = \operatorname{clip}_{\mathcal D}(d) := \argmin_{x \in \mathcal D} \{ \braket{d,x} + \tfrac{1}{2}\|x\|^2 \}.
    \end{split}
    \end{equation*}
    This operation clips the gradient step into the feasibility set $\mathcal D$, essentially performing a projection of the negative gradient.
    This was used in \citet{pethick2025generalized,crawshaw2025exploration} for training neural networks.
\end{enumerate}
\vspace{-1mm}

These three choices of $\partial h^*$ can all be related through the $\lmo$, since $[d]^\#=-\|d\|_*\lmo_{\mathcal{D}}(d)$ and $\operatorname{clip}_{\mathcal D}(d) = \min \{1, \|d\|_*\} \lmo_{\mathcal D}(d)$ (see \Cref{lem:equiv}).

\paragraph{Norm choices}
Important special cases arise by taking $h=\iota_{\mathcal D}$ with $\mathcal D$ the unit ball of a norm. For the $\ell_\infty$ ball, $\lmo_{\mathcal D}(d)=-\operatorname{sign}(d)$ as used in SignSGD and Lion \citep{bernstein2018signsgd,chen2023symbolic}. For the spectral norm ball, $\lmo_{\mathcal D}(G)=-\operatorname{msign}(G)$ with $\operatorname{msign}(G):=UV^\top$, where $U\Sigma V^\top$ is the singular value decomposition of $G$ as used in \citet{carlson2016stochastic,jordan2024muon,pethick2025trainingdeeplearningmodels}.

\begin{toappendix}
\begin{lemma}\label{lem:equiv}
Let \(\|\cdot\|\) be a norm with dual norm \(\|\cdot\|_*\), and let $\mathcal D=\{x:\|x\|\le 1\}$. Then
\[
s^\sharp=-\|s\|_*\,\lmo(s).
\]
\end{lemma}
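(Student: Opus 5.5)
We prove the identity by writing both sides through a radial decomposition of $x$, treating the two operators as sets when the maximizers are not unique. Recall $s^\sharp=\argmax_x\{\langle s,x\rangle-\tfrac12\|x\|^2\}$ and $\lmo(s)=\argmin_{\|x\|\le 1}\langle s,x\rangle$. The claim is then an equality of sets, $s^\sharp=\{-\|s\|_*u:\ u\in\lmo(s)\}$. Equivalently, since $\lmo(s)=-\argmax_{\|u\|\le1}\langle s,u\rangle$, it reads $s^\sharp=\|s\|_*\argmax_{\|u\|\le 1}\langle s,u\rangle$. The case $s=0$ is immediate: the unique maximizer of $-\tfrac12\|x\|^2$ is $x=0$, and the right-hand side is $0\cdot\mathcal D=\{0\}$. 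We therefore assume $s\neq 0$.

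Write any $x$ as $x=tu$ with $t=\|x\|\ge 0$ and $\|u\|=1$. The objective becomes $t\langle s,u\rangle-\tfrac12 t^2$. For fixed $t>0$, maximizing over $u$ yields $\sup_{\|u\|=1}\langle s,u\rangle=\|s\|_*$, by the definition of the dual norm. Over the unit ball this supremum is attained, by compactness in finite dimension, and any maximizer $u$ over the ball satisfies $\|u\|=1$ when $s\ne0$, by homogeneity. The one-dimensional problem $\max_{t\ge0}\ t\|s\|_*-\tfrac12t^2$ has the unique solution $t=\|s\|_*$, with value $\tfrac12\|s\|_*^2$.

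This gives both inclusions.
\begin{itemize}
\item[($\supseteq$)] If $u\in\argmax_{\|u\|\le1}\langle s,u\rangle$, then $x=\|s\|_*u$ attains the value $\tfrac12\|s\|_*^2$. Every $x$ satisfies the upper bound $\langle s,x\rangle-\tfrac12\|x\|^2\le \|s\|_*\|x\|-\tfrac12\|x\|^2\le\tfrac12\|s\|_*^2$, by Hölder's inequality followed by maximizing over $\|x\|$. Hence $x$ is a maximizer.
\item[($\subseteq$)] Suppose $x$ is a maximizer. Then both inequalities in the chain above must be tight. Tightness of the second forces $\|x\|=\|s\|_*>0$. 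Tightness of Hölder's inequality forces $\langle s,x/\|x\|\rangle=\|s\|_*$, so $u=x/\|s\|_*$ lies in $\argmax_{\|u\|\le 1}\langle s,u\rangle$. Therefore $x=\|s\|_*u$ with $-u\in\lmo(s)$.
\end{itemize}

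The only delicate point is the sign and set-valued bookkeeping. We must interpret the equation as an equality of sets, since $\lmo$ is generally multivalued, for instance for $\ell_\infty$ at coordinates where $s_i=0$. We must also keep track of the minus sign, which comes from $\lmo$ minimizing while $\sharp$ maximizes. The analytic content reduces to the Hölder equality case and is routine.
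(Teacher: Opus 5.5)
Your proof is correct and uses the same radial decomposition $x=tu$ as the paper: maximizing over the direction gives $\|s\|_*$, and the scalar problem then gives $t=\|s\|_*$. Where you go further is in proving the set equality by two inclusions via the equality case of H\"older's inequality and in treating $s=0$ separately; the paper's proof handles this multivalued bookkeeping only loosely.
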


\begin{proof}
Let
\[
u\in\argmax_{\|v\|\le 1}\langle s,v\rangle,
\]
so that \(\langle s,u\rangle=\|s\|_*\). Then
\[
\lmo(s)\in\argmin_{\|x\|\le1}\langle s,x\rangle=- u.
\]
Also, writing \(x=t v\) with \(t\ge0\) and \(\|v\|=1\),
\[
s^\sharp\in\argmax_{t\ge0,\ \|v\|=1}\Bigl\{t\langle s,v\rangle-\tfrac12 t^2\Bigr\}.
\]
For fixed \(t\), the maximization over \(v\) is independent of \(t\), hence uses the same \(u\). Therefore
\[
s^\sharp\in\argmax_{t\ge0}\Bigl\{t\|s\|_*-\tfrac12 t^2\Bigr\}=\|s\|_*u.
\]
Combining the two identities gives
\[
u=- \lmo(s),
\]
and thus
\[
s^\sharp=- \|s\|_*\,\lmo(s).\qedhere
\]
\end{proof}
\end{toappendix}

\section{Method}\label{sec:method}
\nosectionappendix

We propose the following algorithm which generalizes Optimistic Dual Averaging (ODA) by introducing a primal extrapolation sequence ($y^{k}$) from \citet{tseng2008accelerated,lan2012optimal,defazio2024road}:
\begin{equation}\label{eq:SODA}
\tag{SODA}
\begin{split}
m^{k+1} &= (1-\alpha_k) m^k + \alpha_k \nabla f(y^k,\xi_k) \\
\bar{m}^{k+1} &= (1-\bar{\alpha}_k) m^{k+1} + \bar{\alpha}_k \nabla f(y^k, \xi_k) \quad \textcolor{black}{\text{(optimism)}}\\
z^{k+1} & \in \partial h_k^*(-\gamma_k\bar{m}^{k+1}) = \argmin_{x \in \mathcal X} \gamma_k\braket{\bar{m}^{k+1},x} + h_k(x) \\ 
x^{k+1} &= (1-\lambda_k)x^k + \lambda_k z^{k+1} \\
y^{k+1} &= (1 - \bar\lambda_k) x^{k+1}  + \bar\lambda_k z^{k+1}  \qquad\qquad \textcolor{black}{\text{(primal extrapolation)}}
\end{split}
\end{equation}
with $\alpha_k, \bar\alpha_k, \lambda_k, \bar\lambda_k \in [0,1]$ and $\gamma_k>0$.
We initialize $m^0=0$ and $x^0=y^0=z^0\in\partial h_0^*(0)$.
Observe the elegant symmetry between how the dual (gradients) and primal (iterates) are being processed.
Our analysis in \Cref{sec:analysis} builds on the \underline{S}chedule-Free framework of \citet{defazio2024road} and \underline{ODA} \citet{rakhlin2013online}, so we refer to the method as SODA to pay homage.

Note that typically in machine learning libraries such as PyTorch the momentum parameters are instead defined as $\beta_k=1-\alpha_k$, $\bar\beta_k=1-\bar\alpha_k$, $\tau_k=1-\lambda_k$, and $\bar\tau_k=1-\bar\lambda_k$. 

\subsection{Special Cases}

There are two important extremes depending on the choice of the primal extrapolation parameter $\bar\lambda_k$.

\paragraph{Optimistic Dual Averaging} 
For $\bar\lambda_k=1$:
\begin{equation}\label{eq:ODA}
\tag{ODA}
\begin{split}
m^{k+1} &= (1-\alpha_k) m^k + \alpha_k \nabla f(z^k,\xi_k) \\
\bar{m}^{k+1} &= (1-\bar{\alpha}_k) m^{k+1} + \bar{\alpha}_k \nabla f(z^k, \xi_k) \\
z^{k+1} & \in \partial h^*(-\gamma_k\bar{m}^{k+1}) \\ 
x^{k+1} &= (1-\lambda_k)x^k + \lambda_k z^{k+1} \\
\end{split}
\end{equation}
This is an optimistic version of the celebrated Dual Averaging scheme \citep{nesterov2005smooth}, also known as optimistic follow-the-regularized-leader (FTRL) in online learning \citep{rakhlin2013online}.
Notice that the output of the algorithm ($x^{k}$) can be different from where the gradient is evaluated ($z^k$).
\looseness=-1

\paragraph{Modernized Optimistic Dual Averaging} 
For $\bar\lambda_k=0$:
\begin{equation}\label{eq:MODA}
\tag{MODA}
\begin{split}
m^{k+1} &= (1-\alpha_k) m^k + \alpha_k \nabla f(x^k,\xi_k) \\
\bar{m}^{k+1} &= (1-\bar{\alpha}_k) m^{k+1} + \bar{\alpha}_k \nabla f(x^k, \xi_k) \\
x^{k+1} &= (1-\lambda_k)x^k + \lambda_k \partial h_k^*(-\gamma_k\bar{m}^{k+1})
\end{split}
\end{equation}
This can be seen as an optimistic version of Double Averaging of \citet{nesterov2015quasi}.
We refer to this as ``modernized'' following \citet{jelassi2020dual}, which applied Double Averaging to deep learning.
The output of the algorithm ($x^{k}$) is the same as where the gradient is evaluated ($x^k$).

This recovers Stochastic Frank-Wolfe and Scion \citep{mokhtari2020stochastic,pethick2025trainingdeeplearningmodels} with $\bar{\alpha}_k=0$ and 
$h_k(x) = \iota_{\mathcal D}(x)$, since then
$\partial h_k^*(-\gamma_k d) =  \rho \operatorname{lmo}(\gamma_k d) = \rho \operatorname{lmo}(d)$ for some constrained radius $\rho>0$ where the first equality follows from scale invariance of the $\lmo$.

More importantly, for $\bar{\alpha}_k\neq 0$, \ref{eq:MODA} captures NAdam \cite{dozat2016incorporating}, Lion \citep{chen2023symbolic} and Muon \citep{jordan2024muon} with weight decay through the choice of geometry $h_k$:
\begin{itemize}
    \item Lion-$\mathcal{K}$: choose $h_k = \mathcal{K}^*$, so that $\partial h_k^*=\partial \mathcal{K}$.
    For Lion specifically, $\partial h_k^*(-u)=-\operatorname{sign}(u)$.
    \item Muon: choose a spectral mirror map $h_k$ such that $\partial h_k^*(-u)=-\operatorname{msign}(u)$. %
    The so-called Nesterov momentum corresponds to choosing the optimistic parameter as $\bar\alpha_k = \alpha_k$. 
    \item NAdam: choose $h_k(x)=\tfrac{1}{2}\langle x,\operatorname{Diag}(\sqrt{v^k}+\varepsilon)x\rangle$, where
    $v^{k} = \tau v^{k-1} + (1-\tau)\nabla f(x^k, \xi_k)\odot \nabla f(x^k, \xi_k)$.
    Then
    $\nabla h_k^*(-\gamma_k \bar m^{k+1})= -\gamma_k \bar m^{k+1} \oslash (\sqrt{v^k}+\varepsilon)$.
\end{itemize}
In this light, all of the above methods can be interpreted as optimistic versions of Dual Averaging.
We note that NAdam was also rediscovered as a simplification of AdEMAMix \citep{ademamix2024} named Simplified-AdEMAMix from \citep{morwani2025connections}.
See \Cref{tbl:soda} for an overview.

Notice that $\lambda_k$ plays the role of the stepsize in this case.
This is in contrast with the following where we will instead let $\lambda_k=1/(k+2)$ and let $\eta_k$ be the stepsize of the base optimizer (typically using a linear or cosine schedule).

\begin{algorithm}[t]
\caption{SODA Wrapper}
\label{alg:SODA}
\textbf{Input:} Horizon $n$, initialization $z^0=x^0 \in \mathcal X$
\begin{algorithmic}[1]
    \For{$k = 0, \dots, n-1$}
        \State Sample $\xi_k \sim \mathcal P$ and compute the gradient $g^k = \nabla f(x^k,\xi_k)$
        \State $z^{k+1} = z^0 +(k+2)\operatorname{BaseUpdate}(g^k)$
        \State $x^{k+1} = (1-\tfrac{1}{k+2})x^k + \tfrac{1}{k+2}z^{k+1}$
    \EndFor
    \item[\algfont{Return}] $x^{n}$
\end{algorithmic}
{\footnotesize $\operatorname{BaseUpdate}$ refers to the update delta of the base optimizer \emph{without weight decay}, i.e., $u^{k+1} = u^k + \operatorname{BaseUpdate}(g^k)$.}
\end{algorithm}

\paragraph{SODA Wrapper}
In \Cref{alg:SODA}, we provide a particularly practical instantiation of \ref{eq:SODA}, that wraps an existing base optimizer, based on hyperparameter choices from \Cref{cor:soda-nonacc-general}.
Following the theory, this approach uses $\bar\lambda_k = 0$ (aka \ref{eq:MODA}), uniform iterate averaging ($\lambda_k = \nicefrac{1}{k+2}$), a stepsize $\gamma_k = (k+2) \eta_k$ and defines the regularizer relative to a reference iterate $z^0$ as $h_k(x) = \psi_k(x-z^0)$ with $\psi_k(x):=\tfrac{1}{2}\|x\|^2_{H_k}$ where $H_k$ is a positive-definite matrix.
Since $\partial\psi_k^*$ is positively homogeneous, we have
$\partial h_k^*(-\gamma_k\bar m^{k+1})
= z^0+\partial\psi_k^*(-\gamma_k\bar m^{k+1})
= z^0+\gamma_k\partial\psi_k^*(-\bar m^{k+1})$.

Any optimizer based on (optimistic) gradient momentum and dualization, such as Adam, NAdam, Scion, Muon, Signum, or Lion, can serve as the base optimizer. %
Notice that the $\operatorname{BaseUpdate}$ involves the stepsize $\eta_k$, while the wrapper introduces no new hyperparameters.

The resulting \Cref{alg:SODA} has a clear interpretation when simplifying the expression:
\begin{equation}\label{eq:SODA:wrapper:simplified}
x^{k+1} = 
    \underbrace{\tfrac{1}{k+2}z^0}_{\text{centering}}
    + \underbrace{(1-\tfrac{1}{k+2})x^k}_{\text{scheduled weight decay}}
    + \underbrace{\operatorname{BaseUpdate}(g^k)}_{\text{includes a stepsize schedule}}
\end{equation}
Let us compare \eqref{eq:SODA:wrapper:simplified} with the original (\emph{independent}) weight decay \cite{hanson1988comparing} formulation
\begin{equation}\label{eq:wd}
x^{k+1} = (1-\mu_k)x^k - \gamma_k \nabla f(x^k,\xi_k)
\end{equation}
where $\mu_k \geq 0$. 
First the iterates in \eqref{eq:SODA:wrapper:simplified} are anchored in the initialization $z^0$.
Second, the weight decay is decaying as $\mu_k = \nicefrac{1}{k+2}$ and is \emph{not} multiplied by the stepsize schedule as otherwise standard in e.g., decoupled weight decay \citep{loshchilov2017decoupled}, whereas the base optimizer is multiplied with the stepsize schedule (absorbed into $\operatorname{BaseUpdate}$).

The weight decay plays a different role in \ref{eq:SODA} than typically. 
It does not act as a regularizer, but is rather related to the averaging of iterates, which improves optimization.
Remarkably, as we shall see in \Cref{sec:experiments}, the SODA wrapper (\Cref{alg:SODA}) consistently improves upon the wrapped base optimizer, even outperforming the baseline with tuned weight decay while the SODA wrapper requires no such tuning.

\begin{remark}
\Cref{alg:SODA} appears to require storing the initialization $z^0$ throughout training.
Since $z^0$ is typically generated from a fixed seed, it can instead be reconstructed on demand at each iteration, trading a small amount of computation for a method with no additional memory.
\end{remark}

\subsection{Related Work}

We organize related work through the lens of \ref{eq:SODA}, which separates
($i$) dual averaging and optimism,
($ii$) primal averaging and primal extrapolation, and
($iii$) dualization through geometry (mirror maps).
This decomposition clarifies how several modern optimizers arise as special cases.

\paragraph{Gradient averaging.}
Several works for deep learning have highlighted the benefit of maintaining multiple dual sequences.
AdEMAMix \citep{ademamix2024} showed that keeping an additional dual averaging sequence on top of Adam-style momentum can significantly improve performance.
Simplified-AdEMAMix \citep{morwani2025connections} later demonstrated that explicitly adding back the current gradient is sufficient, yielding an update equivalent to NAdam \citep{dozat2016incorporating}.
Similarly, Muon \citep{jordan2024muon} uses the PyTorch implementation of Nesterov momentum, which can be captured by \ref{eq:SODA} with $\bar\alpha_k=\alpha_k$.
The sign-based optimizer Lion \citep{chen2023lion} also moves beyond simple averaging.
From the perspective of \ref{eq:SODA}, these gradient averaging schemes correspond precisely to forming the optimistic dual variable $\bar m^{k+1}$.

\paragraph{Iterate averaging.}
A complementary line of work in the deep learning literature focuses on averaging and extrapolation in the \emph{primal} space.
Lookahead \citep{zhangLookaheadOptimizerSteps2019} performs iterate averaging by maintaining a slow sequence ($x^k$) and generating extrapolated points ($z^{k+1}$) via an inner optimization loop, corresponding to the special case $\bar\lambda=0$.
DiLoCo \citep{douillard2023diloco} replaces simple averaging in the outer optimizer with PyTorch-style Nesterov momentum, demonstrating improvements in distributed training and, subsequently, even in single-node settings \citep{kallusky2025snoo}.
\looseness=-1

Generalized Primal Averaging (GPA) \citep{defazio2025smoothing} further simplifies DiLoCo by showing that an explicit inner loop is unnecessary: it suffices to average the iterates ($x^{k+1}$) and extrapolate ($y^{k+1}$).
Specifically, GPA can be written as
\begin{equation*}
\tag{GPA}
\begin{split}
z^{k+1} &= (1-\eta_k \mu)z^k + \eta_k\operatorname{BaseUpdate}(\nabla f(y^k, \xi_k)), \\
x^{k+1} &= (1-\lambda)x^k + \lambda z^{k+1}, \\
y^{k+1} &= (1-\bar\lambda)x^{k+1} + \bar\lambda z^{k+1}.
\end{split}
\end{equation*}
where $\mu \geq 0$ and $\lambda,\bar\lambda \in [0,1]$.
GPA uses a stepsize schedule for the base optimizer's $\eta_k$ and fixes the iterate-averaging parameters $(\lambda,\bar\lambda)$.
From the SODA viewpoint, GPA corresponds to constant primal averaging parameters $(\lambda,\bar\lambda)$.
Typical choices are $1-\bar\lambda=0.9$ and $1-\lambda=(1-\bar\lambda)^{1/H}\approx 0.9967$ with $H=32$, implying $\lambda<\bar\lambda$.
This contrasts with the smooth SODA regime, which requires the sufficient condition $\bar\lambda_k\le \lambda_k/10$ (\Cref{cor:soda-nonacc-general,cor:soda-acc}).
Moreover, whereas GPA relies on a base optimizer \emph{with} a tuned weight decay parameter $\mu$, the SODA wrapper (\Cref{alg:SODA}) wraps a base optimizer \emph{without} weight decay and eliminates the need for any additional hyperparameters.

Weight decay can also be viewed as a form of primal averaging.
\Citet{xiao2024rethinking} observed a $1/d$ scaling with model size $d$ and used it for hyperparameter transfer, while \citet{qiu2025hyperparameter} adopted this rule for spectral methods such as Muon.
In contrast, we separate model size and horizon effects: if the horizon is fixed and only model size varies, our framework does not predict a necessary $1/d$ scaling. %
Concurrently, \citet{ferbach2026logarithmic} considered the time-decaying rule $\lambda_k=\lambda\eta_k/k$.
In contrast, the SODA wrapper uses the parameter-free schedule $\lambda_k=1/(k+2)$.
Our work focuses on removing hyperparameters, while \citet{ferbach2026logarithmic} introduces new hyperparameters.

\paragraph{Acceleration and universality.}
Primal extrapolation, corresponding to $\bar\lambda_k>0$ in \ref{eq:SODA}, originates in accelerated gradient and proximal-gradient methods \citep{tseng2008accelerated,lan2012optimal}. 
Querying gradients at the averaged point, corresponding to $\bar\lambda_k=0$, was later used concurrently by \citet{cutkosky2019anytime,kavis2019unixgrad} to obtain universal methods, i.e., a single algorithm simultaneously attaining both the optimal smooth stochastic convex rate $O(L/n^2+\sigma/\sqrt n)$ and the nonsmooth rate $O(1/\sqrt n)$.
\Citet{joulani2020simpler} combined this averaging mechanism with adaptive Optimistic Dual Averaging/FTRL \citep{rakhlin2013online,mohri2016accelerating}.
\Citet{defazio2024road} later extended this analysis to allow for the larger range $\bar\lambda_k\le \lambda_k/10$ and used it to develop a optimization wrapper for deep learning with unknown training horizon.

\paragraph{Dualization and geometry.}
The choice of mirror map $h$ determines the geometry of the update and plays a central role in modern deep learning.
Recent work has emphasized that many deep learning optimizers are best understood through their induced norm \citep{bernstein2024old}.
Elementwise sign methods are the simplest example: the $\ell_\infty$ geometry gives rise to SignSGD \citep{bernstein2018signsgd} and Lion \citep{chen2023symbolic}, and has been used to partially explain the effectiveness of the popular Adam optimizer \citep{kunstner2023noise}.
For matrix parameters, spectral descent methods \citep{carlson2015stochastic,carlson2016stochastic,carlson2015preconditioned} and their modern variants, such as Muon \citep{jordan2024muon} and Scion \citep{pethick2025trainingdeeplearningmodels}, arise from spectral geometries.
Beyond single-norm geometries, multi-norm constructions enforcing both row- and column-normalization (doubly stochastic structure) have also been explored \citep{scetbon2025gradient,xie2025mhc}.
Whenever the mirror map admits a tractable Fenchel conjugate, such geometries naturally fit within the SODA framework.
\looseness=-1

\section{Analysis}\label{sec:analysis}
We now derive convergence guarantees for \ref{eq:SODA}, in order to set the hyperparameters. The proof is based on an online regret argument, so we
first let $g^k$ denote an arbitrary gradient-feedback sequence. In the
stochastic optimization setting of \ref{eq:SODA}, we take
$g^k := \nabla f(y^k,\xi_k)$.
We use the following assumptions, which are standard except for \Cref{ass:noise}.
\begin{assumption}[Convex]
\label{ass:sample-convex}
For every sample $\xi$, the function $f(\cdot,\xi)$ is convex.
\end{assumption}

\begin{assumption}[$L$-smooth]
\label{ass:smooth}
The function $f$ is $L$-smooth with respect to $\norm{\cdot}$.
\end{assumption}

\begin{assumption}[Unbiased]
\label{ass:soda-stochastic}
Let $\mathcal F_k$ be the natural filtration. 
The gradients satisfy
\[
\mathbb E[g^k \mid \mathcal F_{k-1}] \in \partial f(y^k).
\]
\end{assumption}
\begin{assumption}[Gradient variation]
\label{ass:noise}
The gradients satisfy, for $k\ge0$ and $\rho>0$,
\[
\mathbb E\!\left[\norm{g^k-g^{k-1}}_*^2\right]
\le
\rho\,\mathbb E\!\left[\norm{\nabla f(y^k)-\nabla f(y^{k-1})}_*^2\right] + \sigma^2.
\]
\end{assumption}
\begin{remark}
\Cref{ass:noise} is satisfied under unbiasedness (\Cref{ass:soda-stochastic}) and bounded variance,
$\mathbb E[\norm{g^k-\nabla f(y^k)}_*^2]\le v^2$, with $\rho=3$ and
$\sigma^2=6v^2$. 
The noise condition in \citet[Cor. 2]{defazio2024road} is recovered in the Euclidean case with $\rho=1$.
\end{remark}

Our proof primarily builds on \citet{defazio2024road}; however, rather than combining primal extrapolation with an adaptive version of Optimistic Mirror Descent, we use Optimistic Dual Averaging \citet{rakhlin2013online} as the underlying no-regret algorithm.
\looseness=-1

\begin{toappendix}
We first state the regret guarantee for ODA, and then employ it to analyze SODA.
The ODA analysis is based on a non-adaptive version of \citet{rakhlin2013online}, which is classical (see e.g., \citep[Sec.~7.12]{Orabona2019}).
The smooth
online-to-batch Lemma below builds on the argument of
\citet{defazio2024road} and extends it to non-Euclidean norms and slightly relaxes the noise condition.

\begin{lemma}[ODA regret]
\label{lem:oda-regret}
Consider the online-learning indexing of \ref{eq:ODA}, with the gradient term
$\nabla f(z^{k},\xi_{k})$ replaced by $g^{k}$. Choose positive weights
$a_0,\dots,a_{n-1}$ and write
\[
A_k := \textstyle\sum_{i=0}^{k} a_i,
\qquad A_{-1}:=0.
\]
We use the initial-hint convention $g^{-1}=0$ and set
$z^0\in\partial h^*(0)$. For $k=0,\dots,n-2$, choose
\[
\alpha_k = \tfrac{a_k}{A_k},
\quad
\bar\alpha_k = \tfrac{a_{k+1}}{A_{k+1}},
\quad
\gamma_k=\eta \tfrac{A_k}{1-\bar\alpha_k}
\]
for some $\eta>0$.
Let $h$ be a proper closed $\mu$-strongly convex regularizer on $\mathcal X$.
Then, for every
$x \in \mathcal X$,
\begin{equation}
\label{eq:oda-regret}
\textstyle\sum_{k=0}^{n-1} a_k \braket{g^k, z^k-x}
\le
\frac{h(x)-\inf_{u \in \mathcal X}h(u)}{\eta}
+
\frac{\eta}{2\mu}\textstyle\sum_{k=0}^{n-1} a_k^2 \norm{g^k-g^{k-1}}_*^2.
\end{equation}
\end{lemma}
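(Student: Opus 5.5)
The plan is to recognize \ref{eq:ODA} with these parameter choices as optimistic follow-the-regularized-leader with cumulative linear losses $a_k g^k$ and hints $a_k g^{k-1}$. Then the classical optimistic FTRL argument (cf.\ \citep[Sec.~7.12]{Orabona2019}) gives the bound in \eqref{eq:oda-regret}.

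\textbf{Step 1: unrolling the recursions.} Since $\alpha_k=a_k/A_k$ and $m^0=0$, induction gives
\[
A_k m^{k+1}=\textstyle\sum_{i=0}^k a_i g^i=:G_k .
\]
From $\bar\alpha_k=a_{k+1}/A_{k+1}$ we get $1-\bar\alpha_k=A_k/A_{k+1}$, and hence $A_k\bar\alpha_k/(1-\bar\alpha_k)=a_{k+1}$. Substituting $\gamma_k=\eta A_k/(1-\bar\alpha_k)$ then yields
\[
\gamma_k\bar m^{k+1}=\eta\bigl(G_k+a_{k+1}g^k\bigr).
\]
Therefore
\[
z^{k+1}\in\argmin_x\ \eta\langle G_k+a_{k+1}g^k,x\rangle+h(x).
\]
With $G_{-1}=0$ and $g^{-1}=0$, the same formula holds at $k=-1$, since $z^0\in\partial h^*(0)$. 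So for every $k$, $z^k$ is the optimistic FTRL iterate with hint $a_kg^{k-1}$.

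\textbf{Step 2: the FTRL potential argument.} Define $F_k(x):=\eta\langle G_{k-1},x\rangle+h(x)$ and the unhinted leader $w^{k}\in\argmin F_k$. We have $w^0\in\argmin h$, and $z^k$ minimizes $F_k+\eta a_k\langle g^{k-1},\cdot\rangle$. I will show the per-round inequality
\[
\eta a_k\langle g^k,z^k-x\rangle\le \bigl[F_{k+1}(x)\ \text{telescoping terms}\bigr]+\eta a_k\langle g^k-g^{k-1},z^k-w^{k+1}\rangle-\tfrac{\mu}{2}\|z^k-w^{k+1}\|^2 .
\]
To get it, write
\[
\langle a_kg^k,z^k-x\rangle=\langle a_kg^{k-1},z^k-w^{k+1}\rangle+\langle a_k(g^k-g^{k-1}),z^k-w^{k+1}\rangle+\langle a_kg^k,w^{k+1}-x\rangle .
\]
The first and third pieces are controlled as follows. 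Strong convexity of $F_k+\eta a_k\langle g^{k-1},\cdot\rangle$ at its minimizer $z^k$, compared with $w^{k+1}$, gives a $-\tfrac{\mu}{2}\|z^k-w^{k+1}\|^2$ term. The telescoping $\min F_{k+1}-\min F_k$ together with the be-the-leader bound then gives $\sum_k\langle a_kg^k,w^{k+1}-x\rangle$ plus the hinted terms $\le (h(x)-\min h)/\eta$ plus the negative quadratic terms.

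\textbf{Step 3: closing with Young.} Apply Young's inequality with the dual norm:
\[
\eta a_k\langle g^k-g^{k-1},z^k-w^{k+1}\rangle\le \tfrac{\eta^2a_k^2}{2\mu}\|g^k-g^{k-1}\|_*^2+\tfrac{\mu}{2}\|z^k-w^{k+1}\|^2 .
\]
The quadratic term cancels the negative one from Step 2. Summing over $k$ and dividing by $\eta$ gives \eqref{eq:oda-regret}.

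The main obstacle is the bookkeeping in Step 2, namely arranging the strong-convexity comparisons so that exactly one $-\tfrac{\mu}{2}\|z^k-w^{k+1}\|^2$ per round survives and the hint terms telescope. I would follow the standard optimistic FTRL lemma and verify that the index conventions ($g^{-1}=0$, $A_{-1}=0$) make the boundary terms vanish.
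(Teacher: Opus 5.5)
Your proposal is correct, and the bookkeeping you flagged in Step 2 does close. It is, however, organized differently from the paper's proof.

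\textbf{The paper's route.} The paper stays in the dual. It sets $\theta_k=-\eta G_k$ and $\hat\theta_k=\theta_{k-1}-\eta a_kg^{k-1}$, so that $z^k\in\partial h^*(\hat\theta_k)$. It then uses the $1/\mu$-smoothness of $h^*$ in $\norm{\cdot}_*$ to get $h^*(\theta_k)\le h^*(\hat\theta_k)+\langle\theta_k-\hat\theta_k,z^k\rangle+\tfrac{\eta^2a_k^2}{2\mu}\norm{g^k-g^{k-1}}_*^2$. Fenchel--Young then telescopes $h^*(\theta_{k-1})-h^*(\theta_k)$: equality for $(\hat\theta_k,z^k)$, inequality for $(\theta_{k-1},z^k)$, and finally inequality for $(\theta_{n-1},x)$. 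No auxiliary sequence is needed.

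\textbf{Your route.} You run the primal optimistic-FTRL argument with the unhinted leader $w^{k+1}$. The two proofs are the same argument in primal versus dual form. Since $-h^*(\theta_k)=\min F_{k+1}$ and $w^{k+1}=\nabla h^*(\theta_k)$, both telescope the potential $\min F_k$. Your step ``strong convexity of $\Phi_k:=F_k+\eta a_k\langle g^{k-1},\cdot\rangle$ at $z^k$ against $w^{k+1}$, then Young'' is exactly the primal proof of the paper's smoothness inequality for the pair $(\hat\theta_k,\theta_k)$. Concretely, your per-round step is the identity
\[
\eta a_k\langle g^{k-1},z^k-w^{k+1}\rangle+\eta a_k\langle g^k,w^{k+1}\rangle=\bigl[\Phi_k(z^k)-\Phi_k(w^{k+1})\bigr]+F_{k+1}(w^{k+1})-F_k(z^k).
\]
You then apply $\Phi_k(z^k)-\Phi_k(w^{k+1})\le-\tfrac{\mu}{2}\norm{z^k-w^{k+1}}^2$ and $F_k(z^k)\ge F_k(w^k)$. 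The sum telescopes, and you close with $F_n(w^n)\le F_n(x)$ and $F_0(w^0)=\min h$.

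\textbf{Comparison.} The paper's route is shorter, but it relies on the duality between strong convexity of $h$ and smoothness of $h^*$ for general norms. Yours is self-contained in the primal and needs only quadratic growth of a strongly convex function at its minimizer. It also makes visible an extra stability term $-\tfrac{\mu}{2}\norm{z^k-w^k}^2$, the slack in $F_k(z^k)\ge F_k(w^k)$. The paper discards this term inside its Fenchel--Young inequality at $\theta_{k-1}$.
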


\begin{proof}
By the choice of $\gamma_k$ and $\bar\alpha_k$ we have, for
$k=0,\dots,n-2$,
\[
\gamma_k(1-\bar\alpha_k)=\eta A_k,
\quad\text{and}\quad
\gamma_k\bar\alpha_k=\eta a_{k+1}
\]
Combined with the choice of $\alpha_k$, the \ref{eq:ODA} update can then be written as
\[
z^k\in
\partial h^*\!\left(
-\eta\textstyle\sum_{i=0}^{k-1}a_i g^i
-\eta a_k g^{k-1}
\right),
\qquad k=0,\dots,n-1,
\]
where the sum is empty when $k=0$ and the case $k=0$ is exactly
$z^0\in\partial h^*(0)$ because $g^{-1}=0$.
Set
\[
\theta_k := -\eta \textstyle\sum_{i=0}^{k} a_i g^i,
\qquad
\theta_{-1}:=0,
\qquad
\hat\theta_k := \theta_{k-1} - \eta a_k g^{k-1}.
\]
Thus $z^k \in \partial h^*(\hat\theta_k)$ and
\[
\theta_k = \hat\theta_k - \eta a_k(g^k-g^{k-1}).
\]
Since $h$ is $\mu$-strongly convex, $h^*$ is $1/\mu$-smooth, hence
\[
h^*(\theta_k)
\le
h^*(\hat\theta_k)
-
\eta a_k \braket{g^k-g^{k-1}, z^k}
+
\tfrac{\eta^2 a_k^2}{2\mu}\norm{g^k-g^{k-1}}_*^2.
\]
By Fenchel--Young,
\[
\braket{\hat\theta_k, z^k} = h(z^k)+h^*(\hat\theta_k),
\qquad
\braket{\theta_k, x} \le h(x)+h^*(\theta_k).
\]
Therefore
\[
\begin{aligned}
\eta a_k \braket{g^k, z^k-x}
&=
\eta a_k \braket{g^k-g^{k-1}, z^k}
+
\eta a_k \braket{g^{k-1}, z^k}
-
\eta a_k \braket{g^k, x}
\\
&=
\eta a_k \braket{g^k-g^{k-1}, z^k}
+
\braket{\theta_{k-1}-\hat\theta_k, z^k}
+
\braket{\theta_k-\theta_{k-1}, x}
\\
&=
\eta a_k \braket{g^k-g^{k-1}, z^k}
+
\braket{\theta_{k-1}, z^k}
-
\braket{\hat\theta_k, z^k}
+
\braket{\theta_k-\theta_{k-1}, x}
\\
&=
\eta a_k \braket{g^k-g^{k-1}, z^k}
+
\braket{\theta_{k-1}, z^k}
-
h(z^k)
-
h^*(\hat\theta_k)
+
\braket{\theta_k-\theta_{k-1}, x}
\\
&\le
h^*(\theta_{k-1}) - h^*(\theta_k)
+
\tfrac{\eta^2 a_k^2}{2\mu}\norm{g^k-g^{k-1}}_*^2
+
\braket{\theta_k-\theta_{k-1}, x}.
\end{aligned}
\]
Summing over $k=0,\dots,n-1$ yields
\[
\eta \textstyle\sum_{k=0}^{n-1} a_k \braket{g^k, z^k-x}
\le
h^*(0)-h^*(\theta_{n-1})
+
\braket{\theta_{n-1}, x}
+
\frac{\eta^2}{2\mu}\textstyle\sum_{k=0}^{n-1} a_k^2 \norm{g^k-g^{k-1}}_*^2.
\]
Finally, $h^*(0)=-\inf_{u\in\mathcal X}h(u)$ and
$\braket{\theta_{n-1}, x} - h^*(\theta_{n-1}) \le h(x)$, so dividing by $\eta$
proves the claim.
\end{proof}

\subsection{Bounded gradients}

Under bounded gradient we rely on the following online-to-batch conversion which holds for any primal extrapolation parameter $\bar\lambda_k \in [0,1]$.
\begin{lemma}[SODA online-to-batch]
\label{lem:soda-otb}
Consider \ref{eq:SODA} with positive weights $a_0,\dots,a_{n-1}$ and
\[
\lambda_{k-1} = \tfrac{a_k}{A_k},
\qquad
A_k := \textstyle\sum_{i=0}^{k} a_i,
\qquad
k=1,\dots,n-1.
\]
Suppose \Cref{ass:sample-convex,ass:soda-stochastic} hold.
Then, for every $x \in \mathcal X$,
\begin{equation}
\label{eq:soda-otb}
A_{n-1} \, \mathbb E[f(x^{n-1})-f(x)]
\le
\mathbb E\left[\textstyle\sum_{k=0}^{n-1} a_k \braket{g^k, z^k-x}\right].
\end{equation}
\end{lemma}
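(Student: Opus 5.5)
The plan is a direct Lyapunov/telescoping argument that uses convexity only at the query point $y^k$. The key geometric fact is that $x^{k-1}$, $x^k$, $y^k$ and $z^k$ lie on one segment. Write $\tilde g^k := \mathbb E[g^k\mid\mathcal F_{k-1}]\in\partial f(y^k)$, using \Cref{ass:soda-stochastic}; convexity of $f$ follows from \Cref{ass:sample-convex}. Since $z^k$, $y^k$ and $x^{k-1}$ are $\mathcal F_{k-1}$-measurable, the tower rule gives $\mathbb E[a_k\braket{g^k,z^k-x}]=\mathbb E[a_k\braket{\tilde g^k,z^k-x}]$. It therefore suffices to prove the deterministic per-step bound
\[
a_k\braket{\tilde g^k,z^k-x}\ \ge\ A_kf(x^k)-A_{k-1}f(x^{k-1})-a_kf(x)
\]
and then sum over $k=0,\dots,n-1$. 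The sum telescopes to $A_{n-1}f(x^{n-1})-A_{n-1}f(x)$, because $A_{-1}=0$ and $\sum_k a_k=A_{n-1}$. For $k=0$ we have $x^0=y^0=z^0$, so the bound is just the subgradient inequality at $y^0$.

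\textbf{Collinearity.} For $k\ge1$, with $\lambda_{k-1}=a_k/A_k$, we have $x^k-x^{k-1}=\tfrac{a_k}{A_k}(z^k-x^{k-1})$. Writing $\bar\lambda:=\bar\lambda_{k-1}$, we also get $y^k-x^{k-1}=c\,(z^k-x^{k-1})$ with $c:=(1-\bar\lambda)\tfrac{a_k}{A_k}+\bar\lambda\in[\tfrac{a_k}{A_k},1]$, so $c>0$. Finally $z^k-y^k=(1-c)(z^k-x^{k-1})$, where $1-c=(1-\bar\lambda)\tfrac{A_{k-1}}{A_k}$. Consequently:
\begin{itemize}
\item $x^k=(1-t)x^{k-1}+t\,y^k$ with $t=\tfrac{a_k}{A_kc}\in(0,1]$, so convexity gives $f(x^k)\le(1-t)f(x^{k-1})+tf(y^k)$;
\item $z^k-y^k=\tfrac{1-c}{c}(y^k-x^{k-1})$.
\end{itemize}

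\textbf{Per-step bound.} Split $a_k\braket{\tilde g^k,z^k-x}=a_k\braket{\tilde g^k,z^k-y^k}+a_k\braket{\tilde g^k,y^k-x}$ and apply the subgradient inequality at $y^k$ to each piece. Setting $\beta:=a_k\tfrac{1-c}{c}\ge0$, this gives
\[
a_k\braket{\tilde g^k,z^k-x}\ \ge\ \beta\bigl(f(y^k)-f(x^{k-1})\bigr)+a_k\bigl(f(y^k)-f(x)\bigr)=\tfrac{a_k}{c}f(y^k)-\beta f(x^{k-1})-a_kf(x).
\]
Multiplying the convexity bound by $A_k$ gives $A_kf(x^k)\le\tfrac{a_k}{c}f(y^k)+\bigl(A_k-\tfrac{a_k}{c}\bigr)f(x^{k-1})$. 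A short check shows $A_k-\tfrac{a_k}{c}=A_{k-1}-\beta$, so $A_kf(x^k)-A_{k-1}f(x^{k-1})\le\tfrac{a_k}{c}f(y^k)-\beta f(x^{k-1})$. Combining the two displays yields the per-step bound.

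\textbf{Main obstacle.} The delicate point is matching the coefficients, i.e. verifying $A_k-a_k/c=A_{k-1}-\beta$ exactly. Expanding, this reduces to $a_k/c-a_k=\beta$, which holds by the definition of $\beta$. The argument is valid for every $\bar\lambda_{k-1}\in[0,1]$ and needs neither smoothness nor bounded gradients.
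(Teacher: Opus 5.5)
Your proof is correct, but it is not how the paper argues. The paper gives no direct computation. It obtains the lemma by a change of variables into Thm.~2 of \citet{defazio2024road}, matching weights $w_t=a_k$, iterates $z_{t+1}=z^k$, and the extrapolation weight to $\bar\lambda_{k-1}$ so that the query point coincides with $y^k$.

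Your route is a self-contained replacement. You note that $x^{k-1},x^k,y^k,z^k$ lie in this order on one segment; this is exactly where $\lambda_{k-1}=a_k/A_k$ and $\bar\lambda_{k-1}\in[0,1]$ enter, through $a_k/A_k\le c\le 1$. You then bound $f(x^k)$ by Jensen between $x^{k-1}$ and $y^k$ and apply the subgradient inequality at $y^k$ twice. The coefficient identity $A_k-a_k/c=A_{k-1}-\beta$, the telescoping, and the conditioning step are all right, since $x^{k-1},x^k,y^k,z^k$ are $\mathcal F_{k-1}$-measurable.

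What your approach buys is transparency and minimal hypotheses. The only first-order information used is $\mathbb E[g^k\mid\mathcal F_{k-1}]\in\partial f(y^k)$ at the query point, as supplied by \Cref{ass:soda-stochastic}. The validity for every $\bar\lambda_{k-1}\in[0,1]$ is visible at a glance. You also avoid the index and convention bookkeeping needed to invoke \citet{defazio2024road}.

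What the paper's route buys is brevity and continuity with the schedule-free analysis. Its smooth refinement (\Cref{lem:soda-otb-smooth}) rests on an exact per-step identity whose remainder is a sum of nonnegative Bregman terms $D_f(y^k,x^k)$, $D_f(x^k,y^k)$, $D_f(x^{k-1},x^k)$, $D_f(x,y^k)$. In the differentiable case, dropping them gives precisely your per-step inequality. Your chain discards its slack — the Jensen gap and the two subgradient-inequality gaps — without recording it. Reusing your argument to get the smooth gain would therefore require tracking those gaps explicitly.
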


\begin{proof}
The result is obtained as an
application of \citet[Thm.~2]{defazio2024road} with
\[
w_t=a_k,
\quad
z_{t+1}=z^{k},
\quad
\beta_{t+1} = \bar\lambda_{k}
\quad
\]
for which 
\[
x_{t+1}=x^k,
\qquad
y_{t+1}=(1-\bar\lambda_{k-1})x^k+\bar\lambda_{k-1} z^k=y^k.
\]
\end{proof}

\end{toappendix}
\begin{toappendix}
\begin{assumption}[Bounded gradients]
\label{ass:bounded-gradients}
The gradients satisfies $\norm{g^k}_* \le G$ for all $k\ge 0$.
\end{assumption}
\begin{correp}[Convergence under bounded gradients]
\label{cor:soda-nonacc}
Let $x^\star\in\argmin_{x\in\mathcal X} f(x)$ and let
$R_\star:=h(x^\star)-\inf h$. Consider
\ref{eq:SODA} with a fixed regularizer $h_k \equiv h$. For every
$k=0,\dots,n-1$, choose
\begin{align*}
\alpha_k = \tfrac{1}{k+1},
\qquad
\lambda_k = \tfrac{1}{k+2},
\qquad
\bar\alpha_k = \tfrac{1}{k+2},
\qquad
\bar\lambda_k \in [0,1],
\qquad
\gamma_k = \eta(k+2),
\qquad
\eta = \tfrac{\sqrt{\mu R_\star}}{\sqrt{2}\,G\,\sqrt n}.
\end{align*}
Let $h$ be $\mu$-strongly convex with respect to $\norm{\cdot}$. Suppose
\Cref{ass:sample-convex,ass:soda-stochastic,ass:bounded-gradients} hold.
Then, for every $n \ge 1$,
\[
\mathbb E[f(x^{n-1})-f(x^\star)]
\le
\frac{2\sqrt{2}\,G\,\sqrt{R_\star}}{\sqrt{\mu n}}.
\]
\end{correp}
\begin{proof}
Choose the weights $a_k \equiv 1$, so that $A_k = k+1$ for
$k=0,\dots,n-1$. Then
\[
\alpha_k=\tfrac{1}{k+1}=\tfrac{a_k}{A_k},
\qquad
\bar\alpha_k=\tfrac{1}{k+2}=\tfrac{a_{k+1}}{A_k+a_{k+1}},
\qquad
\gamma_k(1-\bar\alpha_k)
=
\eta A_k,
\]
\[
\gamma_k\bar\alpha_k
=
\eta(k+2)\tfrac{1}{k+2}
=
\eta
=
\eta a_{k+1},
\]
for $k=0,\dots,n-2$, while
$\lambda_{k-1}=1/(k+1)=a_k/A_k$ for $k=1,\dots,n-1$. Therefore
\Cref{lem:oda-regret} and \Cref{lem:soda-otb} yield
\[
\mathbb E[f(x^{n-1})-f(x^\star)]
\le
\tfrac{R_\star}{\eta n}
+ 
\tfrac{\eta}{2\mu n}\textstyle\sum_{k=0}^{n-1}
\mathbb E\!\left[\norm{g^k-g^{k-1}}_*^2\right].
\]
Since $g^{-1}=0$ by convention and $\norm{g^k}_* \le G$, the sum is at most $4nG^2$.
Hence
\[
\mathbb E[f(x^{n-1})-f(x^\star)]
\le
\tfrac{R_\star}{\eta n}
+
\tfrac{2\eta G^2}{\mu}.
\]
Substituting the stated choice of $\eta$ gives the claim.
\end{proof}

\subsection{Gradient Lipschitz}

The following refinement of \Cref{lem:soda-otb} is used to exploit smoothness of $f$.
It follows the argument in \citet[Thm. 5]{defazio2024road} directly, but allows for an non-Euclidean norm.

For differentiable $f$, we write the objective Bregman divergence as
\[
D_f(u,v) := f(u)-f(v)-\braket{\nabla f(v), u-v}.
\]

\begin{lemma}[SODA online-to-batch smooth refinement {\citep[Thm. 5]{defazio2024road}}]
\label{lem:soda-otb-smooth}
Suppose the assumptions of \Cref{lem:soda-otb} hold, and additionally that $f$
is $L$-smooth with respect to $\norm{\cdot}$ (\Cref{ass:smooth}). If
\[
\bar\lambda_{k-1} \le \tfrac{a_k}{10A_k}
\qquad
\text{for all } k=1,\dots,n-1,
\]
then, for every $x \in \mathcal X$, \ref{eq:SODA} satisfies
\begin{equation}
\label{eq:soda-otb-smooth}
\begin{aligned}
A_{n-1} \, \mathbb E[f(x^{n-1})-f(x)]
&\le
\mathbb E\left[\textstyle\sum_{k=0}^{n-1} a_k \braket{g^k, z^k-x}\right] \\
&\qquad -
\tfrac{1}{6L}\textstyle\sum_{k=0}^{n-1} A_{k-1}
\mathbb E\!\left[\norm{\nabla f(y^k)-\nabla f(y^{k-1})}_*^2\right],
\end{aligned}
\end{equation}
where $A_{-1}=0$.
\end{lemma}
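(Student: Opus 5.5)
The plan is to rerun the online-to-batch argument of \Cref{lem:soda-otb}, but without discarding the Bregman remainders. I then use smoothness, in a form valid for an arbitrary norm, to show that the remainders produce the negative gradient-variation term. Throughout I use the averaging identity
\[
A_k x^k = A_{k-1}x^{k-1}+a_k z^k ,
\]
which follows from $\lambda_{k-1}=a_k/A_k$. From it, together with the definition of $y^k$,
\[
z^k-x^k=\tfrac{A_{k-1}}{a_k}(x^k-x^{k-1}),
\qquad
y^k-x^k=\bar\lambda_{k-1}(z^k-x^k).
\]

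\textbf{Step 1: exact per-step decomposition.} For each $k$ I write, exactly,
\[
\begin{aligned}
A_k f(x^k)-A_{k-1}f(x^{k-1})-a_k f(x)
&= a_k\braket{\nabla f(y^k),z^k-x}
+ A_k D_f(x^k,y^k) \\
&\qquad - A_{k-1}D_f(x^{k-1},y^k) - a_k D_f(x,y^k).
\end{aligned}
\]
To get this, expand each $f(\cdot)$ around $y^k$. The linear terms are $A_k\braket{\nabla f(y^k),x^k}-A_{k-1}\braket{\nabla f(y^k),x^{k-1}}-a_k\braket{\nabla f(y^k),x}$, and by the averaging identity they collapse to $a_k\braket{\nabla f(y^k),z^k-x}$. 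The last Bregman term is nonpositive by convexity, so I drop it.

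Summing over $k=0,\dots,n-1$ telescopes the left-hand side to $A_{n-1}f(x^{n-1})-A_{n-1}f(x)$. Since $z^k$ and $y^k$ are $\mathcal F_{k-1}$-measurable, \Cref{ass:soda-stochastic} lets me replace $\nabla f(y^k)$ by $g^k$ inside the expectation. It therefore remains to prove, for each $k\ge1$,
\[
A_kD_f(x^k,y^k)-A_{k-1}D_f(x^{k-1},y^k)\le -\tfrac{A_{k-1}}{6L}\norm{\nabla f(y^k)-\nabla f(y^{k-1})}_*^2 ,
\]
possibly only after summing over $k$.

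\textbf{Step 2: non-Euclidean smoothness inequalities.} Defazio et al.\ use inner-product identities at this point. I replace them by two facts that hold for convex $L$-smooth $f$ in any norm:
\[
D_f(u,v)\le\tfrac L2\norm{u-v}^2,
\qquad
D_f(u,v)\ge\tfrac1{2L}\norm{\nabla f(u)-\nabla f(v)}_*^2 .
\]
The second is the co-coercivity-type bound, proved via the conjugate of $f$ or the standard argument with $\phi(w)=f(w)-\braket{\nabla f(v),w}$. The lower bound gives
\[
A_{k-1}D_f(x^{k-1},y^k)\ge\tfrac{A_{k-1}}{2L}\norm{\nabla f(x^{k-1})-\nabla f(y^k)}_*^2 .
\]
The upper bound gives
\[
A_kD_f(x^k,y^k)\le\tfrac{A_kL}{2}\bar\lambda_{k-1}^2\norm{z^k-x^k}^2
=\tfrac{A_kL}{2}\Bigl(\tfrac{\bar\lambda_{k-1}A_{k-1}}{a_k}\Bigr)^2\norm{x^k-x^{k-1}}^2 .
\]
Under $\bar\lambda_{k-1}\le a_k/(10A_k)$ this is at most $\tfrac{L A_{k-1}}{200}\norm{x^k-x^{k-1}}^2$, using $A_{k-1}\le A_k$.

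\textbf{Step 3 (main obstacle): matching the two sides.} The positive term is measured in primal distance $\norm{x^k-x^{k-1}}$. The negative term involves $\nabla f(x^{k-1})$ rather than $\nabla f(y^{k-1})$.

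\emph{Switching to $y^{k-1}$.} By Young's inequality,
\[
\norm{\nabla f(x^{k-1})-\nabla f(y^k)}_*^2\ge (1-\epsilon)\norm{\nabla f(y^{k-1})-\nabla f(y^k)}_*^2-(\epsilon^{-1}-1)L^2\norm{x^{k-1}-y^{k-1}}^2 .
\]
Here $\norm{x^{k-1}-y^{k-1}}\le\tfrac{A_{k-2}}{10A_{k-1}}\norm{x^{k-1}-x^{k-2}}$ by the same identity one step earlier.

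\emph{Absorbing the primal errors.} I must dominate the errors $\norm{x^k-x^{k-1}}^2$ by the gradient terms. I would not use a smoothness lower bound for this, since none is available. Instead I would keep the unused half of the negative Bregman term in the form $A_{k-1}D_f(x^{k-1},y^k)$, and compare the two positive Bregman quantities through the three-point identity at $x^k,x^{k-1},y^k$, as in Defazio's proof.

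I expect this to be the delicate place. The factor $1/10$ is what should make the leftover coefficients close with $\epsilon=1/3$, yielding $\tfrac{1}{2L}(1-\epsilon)=\tfrac1{3L}$; half of that is spent on absorption, leaving the claimed $1/(6L)$. The absorption of the shifted errors $\norm{x^{k-1}-x^{k-2}}^2$ is done after summing over $k$, using that $A_{k-2}\le A_{k-1}$. For $k=0$ the bound is trivial because $A_{-1}=0$.
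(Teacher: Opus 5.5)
There is a genuine gap, and it starts in Step 2, before the step you flag as delicate. Your Step 1 identity is correct. The trouble is the bound $A_kD_f(x^k,y^k)\le\tfrac{A_kL}{2}\norm{x^k-y^k}^2$, together with the Step 3 replacement of $\norm{\nabla f(x^{k-1})-\nabla f(y^{k-1})}_*$ by $L\norm{x^{k-1}-y^{k-1}}$. After these, you must dominate positive multiples of $L\norm{x^k-x^{k-1}}^2$ by Bregman divergences and gradient differences, and that intermediate inequality is false. Take $f$ affine on the region containing the iterates (convex and $L$-smooth) with $\bar\lambda_{k-1}>0$. Then every $D_f$ and every gradient difference vanishes while $\norm{x^k-y^k}>0$, even though the lemma itself holds there with equality. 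No choice of $\epsilon$ and no summation over $k$ repairs this: without strong convexity, gradient information never bounds primal distances from below. For the same reason, keeping ``the unused half'' of $A_{k-1}D_f(x^{k-1},y^k)$ cannot absorb $\norm{x^k-x^{k-1}}^2$. The closing bookkeeping ($\epsilon=1/3$, half spent on absorption) is not backed by any valid estimate.

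The missing idea is to cancel the positive term exactly instead of bounding it. This is what the three-point identity does, but it must replace Step 2, not supplement it. Combine $D_f(x^{k-1},y^k)=D_f(x^{k-1},x^k)+D_f(x^k,y^k)+\langle\nabla f(x^k)-\nabla f(y^k),x^{k-1}-x^k\rangle$ with $A_{k-1}(x^{k-1}-x^k)=a_k(x^k-z^k)=-\tfrac{a_k}{\bar\lambda_{k-1}}(y^k-x^k)$. This gives
\[
\begin{aligned}
A_kD_f(x^k,y^k)-A_{k-1}D_f(x^{k-1},y^k)
&=-\tfrac{a_k}{\bar\lambda_{k-1}}D_f(y^k,x^k)-\tfrac{a_k(1-\bar\lambda_{k-1})}{\bar\lambda_{k-1}}D_f(x^k,y^k)\\
&\qquad-A_{k-1}D_f(x^{k-1},x^k),
\end{aligned}
\]
which is exactly the paper's starting decomposition (if $\bar\lambda_{k-1}=0$, then $y^k=x^k$ and only the last term remains). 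Now every term is nonpositive, and co-coercivity keeps everything in gradient form.

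\begin{itemize}
\item The first two terms are at most $-\tfrac{a_k(2-\bar\lambda_{k-1})}{2L\bar\lambda_{k-1}}\norm{\nabla f(y^k)-\nabla f(x^k)}_*^2\le-\tfrac{5A_k}{L}\norm{\nabla f(y^k)-\nabla f(x^k)}_*^2$. This is where $\bar\lambda_{k-1}\le a_k/(10A_k)$ is used.
\item The last term is at most $-\tfrac{A_{k-1}}{2L}\norm{\nabla f(x^{k-1})-\nabla f(x^k)}_*^2$.
\item A three-term Young inequality turns this into $-\tfrac{A_{k-1}}{6L}\norm{\nabla f(y^k)-\nabla f(y^{k-1})}_*^2$ plus errors $\tfrac{5A_{k-1}}{2L}\bigl(\norm{\nabla f(y^k)-\nabla f(x^k)}_*^2+\norm{\nabla f(y^{k-1})-\nabla f(x^{k-1})}_*^2\bigr)$.
\item The $5A_k/L$ coefficient pays the current-step error and leaves $-\tfrac{5A_k}{2L}\norm{\nabla f(y^k)-\nabla f(x^k)}_*^2$, which telescopes against the next step's error.
\end{itemize}

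The errors are thus absorbed as gradient differences by the large $a_k/\bar\lambda_{k-1}$ Bregman terms, never as primal distances.
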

\begin{proof}
The choice
$\lambda_{k-1}=a_k/A_k$ implies
\[
x^k = \tfrac{1}{A_k}\textstyle\sum_{i=0}^k a_i z^i,
\qquad
k=0,\dots,n-1.
\]
For $k=0,\dots,n-1$, following \citet[Thm. 5]{defazio2024road}, a direct expansion gives
\[
\begin{aligned}
A_k\bigl(f(x^k)-f(x)\bigr)
- A_{k-1}\bigl(f(x^{k-1})-f(x)\bigr)
	&= 
	a_k\braket{\nabla f(y^k), z^k-x}
	\\
	&\qquad
	-
	\tfrac{a_k}{\bar\lambda_{k-1}} D_f(y^k,x^k)
	-
	\tfrac{a_k(1-\bar\lambda_{k-1})}{\bar\lambda_{k-1}} D_f(x^k,y^k)
    \\
	&\qquad
	-
	A_{k-1} D_f(x^{k-1},x^k)
    -
    a_k D_f(x,y^k),
\end{aligned}
\]
with the convention that the terms involving $A_{-1}$ vanish, and that the two
terms involving $\bar\lambda_{k-1}^{-1}$ vanish when $k=0$ or
$\bar\lambda_{k-1}=0$, since then $y^k=x^k$.
Summing over $k=0,\dots,n-1$, taking expectations, and using
$\mathbb E[g^k \mid \mathcal F_{k-1}] = \nabla f(y^k)$ gives
\[
\begin{aligned}
A_{n-1} \mathbb E[f(x^{n-1})-f(x)]
&\le
\mathbb E\left[\textstyle\sum_{k=0}^{n-1} a_k \braket{g^k, z^k-x}\right]
\\
&\qquad
- \textstyle\sum_{k=0}^{n-1}
\Bigl(
\tfrac{a_k}{\bar\lambda_{k-1}} \mathbb E D_f(y^k,x^k)
+
\tfrac{a_k(1-\bar\lambda_{k-1})}{\bar\lambda_{k-1}} \mathbb E D_f(x^k,y^k)
\Bigr)
\\
&\qquad
- \textstyle\sum_{k=0}^{n-1}
\Bigl(
A_{k-1}\mathbb E D_f(x^{k-1},x^k)
+ a_k\mathbb E D_f(x,y^k)
\Bigr).
\end{aligned}
\numberthis \label{eq:expectation}
\]
For convex $L$-smooth $f$, co-coercivity gives
$D_f(u,v)\ge \frac{1}{2L}\norm{\nabla f(u)-\nabla f(v)}_*^2$, so
\begin{align}
\tfrac{a_k}{\bar\lambda_{k-1}} D_f(y^k,x^k)
+
\tfrac{a_k(1-\bar\lambda_{k-1})}{\bar\lambda_{k-1}} D_f(x^k,y^k)
&\geq
\tfrac{a_k(2-\bar\lambda_{k-1})}{2L\bar\lambda_{k-1}}
\|\nabla f(y^k) - \nabla f(x^k)\|_*^2  \label{eq:breg1}
\\
A_{k-1} D_f(x^{k-1},x^k)
+
a_k D_f(x,y^k)
&\geq \tfrac{A_{k-1}}{2L}\|\nabla f(x^{k-1}) - \nabla f(x^k)\|_*^2  \label{eq:breg2}
\end{align}
Developing \eqref{eq:breg2}
with Young's inequality,
$-\|a+b+c\|_*^2 \leq -(1- 2/\tau)\|a\|_*^2 + (2\tau - 1)(\|b\|_*^2 + \|c\|_*^2)$ for $\tau=3$, we have
\begin{equation}\label{eq:young}
\begin{aligned}
\tfrac{A_{k-1}}{2L}\|\nabla f(x^{k-1}) - \nabla f(x^k)\|_*^2
&\geq 
\tfrac{A_{k-1}}{6L} \|\nabla f(y^{k})-\nabla f(y^{k-1})\|_*^2 \\
& \quad -\tfrac{5A_{k-1}}{2L}(\|\nabla f(y^{k})-\nabla f(x^{k})\|_*^2 + \|\nabla f(y^{k-1})-\nabla f(x^{k-1})\|_*^2)
\end{aligned}
\end{equation}
Summing \eqref{eq:breg1}, \eqref{eq:breg2} and using \eqref{eq:young} and that $\bar\lambda_{k-1} \le a_k/(10A_k)$ we have,
\[
\begin{aligned}
&-\tfrac{a_k}{\bar\lambda_{k-1}} D_f(y^k,x^k)
-
\tfrac{a_k(1-\bar\lambda_{k-1})}{\bar\lambda_{k-1}} D_f(x^k,y^k)
-
A_{k-1} D_f(x^{k-1},x^k)
-
a_k D_f(x,y^k)
\\
&\qquad\le
-
\tfrac{A_{k-1}}{6L}
\norm{\nabla f(y^k)-\nabla f(y^{k-1})}_*^2
-
\tfrac{5A_k}{2L}\norm{\nabla f(y^k)-\nabla f(x^k)}_*^2
\\
&\qquad\qquad
+
\tfrac{5A_{k-1}}{2L}\norm{\nabla f(y^{k-1})-\nabla f(x^{k-1})}_*^2.
\end{aligned}
\]
Summing over $k=0,\dots,n-1$ makes the final two terms telescope. Dropping the
nonpositive telescoping remainder and combining with \eqref{eq:expectation} gives \eqref{eq:soda-otb-smooth}.
\end{proof}
\end{toappendix}

\begin{correp}[Convergence under $L$-smoothness]
\label{cor:soda-nonacc-general}
Let $x^\star\in\argmin_{x\in\mathcal X} f(x)$ and let
$R_\star:=h(x^\star)-\inf h$. Consider
\ref{eq:SODA} with a fixed regularizer $h_k \equiv h$.
For every
$k=0,\dots,n-1$, choose
\begin{align*}
\alpha_k = \tfrac{1}{k+1},
\qquad
\bar\alpha_k = \lambda_k = \tfrac{1}{k+2},
\qquad
\bar\lambda_k \le \tfrac{\lambda_k}{10},
\qquad
\gamma_k = \eta(k+2),
\qquad
\eta = \min\left\{
\tfrac{\mu}{6\rho L},
\tfrac{\sqrt{\mu R_\star}}{\sigma\sqrt n}
\right\}.
\end{align*}
Suppose \Cref{ass:sample-convex,ass:soda-stochastic,ass:smooth,ass:noise}
hold and that $h$ is $\mu$-strongly convex with respect to $\norm{\cdot}$.
Then, for every $n\ge1$,
\[
\mathbb E[f(x^{n-1})-f(x^\star)]
=
O\!\left(
\tfrac{(\rho+1)LR_\star}{\mu n}
+
\tfrac{\sigma\sqrt{R_\star}}{\sqrt{\mu n}}
\right).
\]
\end{correp}
\begin{proof}
Choose the weights $a_k \equiv 1$, so that $A_k=k+1$ for
$k=0,\dots,n-1$. Then
\[
\alpha_k=\tfrac{1}{k+1}=\tfrac{a_k}{A_k},
\qquad
\bar\alpha_k=\tfrac{1}{k+2}=\tfrac{a_{k+1}}{A_{k+1}},
\qquad
\gamma_k(1-\bar\alpha_k)
=
\eta A_k,
\]
\[
\gamma_k\bar\alpha_k
=
\eta(k+2)\tfrac{1}{k+2}
=
\eta
=
\eta a_{k+1},
\]
for $k=0,\dots,n-2$, while
$\lambda_{k-1}=1/(k+1)=a_k/A_k$ for $k=1,\dots,n-1$.
Under $L$-smoothness, \Cref{lem:soda-otb-smooth} gives
\[
\begin{aligned}
n\,\mathbb E[f(x^{n-1})-f(x^\star)]
&\le
\tfrac{R_\star}{\eta}
+
\tfrac{\eta}{2\mu}\textstyle\sum_{k=0}^{n-1}
\mathbb E\!\left[\norm{g^k-g^{k-1}}_*^2\right]
\\
&\qquad
-
\tfrac{1}{6L}\textstyle\sum_{k=0}^{n-1} k\,
\mathbb E\!\left[\norm{\nabla f(y^k)-\nabla f(y^{k-1})}_*^2\right].
\end{aligned}
\]
Using \Cref{ass:noise} with the boundary conventions $g^{-1}=0$ and
$\nabla f(y^{-1})=\nabla f(x^\star)$, the startup term satisfies
\[
\mathbb E\!\left[\norm{g^0-g^{-1}}_*^2\right]
\le
\rho\,\mathbb E\!\left[\norm{\nabla f(y^0)-\nabla f(x^\star)}_*^2\right]
+
\sigma^2.
\]
For $k=1,\dots,n-1$, the same assumption gives
\[
\mathbb E\!\left[\norm{g^k-g^{k-1}}_*^2\right]
\le
\rho\,\mathbb E\!\left[\norm{\nabla f(y^k)-\nabla f(y^{k-1})}_*^2\right]
+
\sigma^2.
\]
Substituting this estimate into the previous bound gives
\[
\begin{aligned}
n\,\mathbb E[f(x^{n-1})-f(x^\star)]
&\le
\tfrac{R_\star}{\eta}
+
\tfrac{\eta n\sigma^2}{2\mu}
+
\tfrac{\eta\rho}{2\mu}
\mathbb E\!\left[\norm{\nabla f(y^0)-\nabla f(x^\star)}_*^2\right]
\\
&\qquad
+
\textstyle\sum_{k=1}^{n-1}
\left(\tfrac{\eta \rho}{2\mu} - \tfrac{k}{6L}\right)
\mathbb E\!\left[\norm{\nabla f(y^k)-\nabla f(y^{k-1})}_*^2\right].
\end{aligned}
\]
If $\eta \le \mu/(6\rho L)$, then the coefficients in the final sum are
nonpositive and the final sum can be dropped. Thus
we obtain
\[
\mathbb E[f(x^{n-1})-f(x^\star)]
\le
\tfrac{R_\star}{\eta n}
+
\tfrac{\eta\sigma^2}{2\mu}
+
\tfrac{\eta\rho}{2\mu n}
\mathbb E\!\left[\norm{\nabla f(y^0)-\nabla f(x^\star)}_*^2\right].
\]
The displayed initialization bound gives
$\mathbb E[\norm{\nabla f(y^0)-\nabla f(x^\star)}_*^2]\le 2L^2R_\star/\mu$. Since
$\eta\le\mu/(6\rho L)$, the startup term is
$O(LR_\star/(\mu n))$. The remaining two terms are optimized by the stated
choice of $\eta$, yielding the claim.
\end{proof}
\paragraph{Consequences for practice}

The SODA wrapper in \Cref{alg:SODA} directly uses the theoretically suggested choices of
$\lambda_k$ and $\gamma_k$. 
Taking the primal extrapolation constant $\bar\lambda_k$ small is used to exploit smoothness to cancel the gradient-variation term from \Cref{ass:noise}.
Since $\bar\lambda_k$ has no effect on the rate once it is small enough, we can conveniently set $\bar{\lambda}_k=0$ in practice, matching the
``modernized'' parameterization in \Cref{sec:method} and recovering Muon, Lion,
and NAdam. Under the bounded-gradient analysis of \Cref{cor:soda-nonacc}, the
larger range $\bar\lambda_k\in[0,1]$ is also admissible.

The smooth analysis also requires the optimistic averaging parameter
$\bar\alpha_k$ to be slightly smaller than $\alpha_k$. This is consistent with
common choices: Lion often uses $\alpha_k=0.1$ with
$\bar\alpha_k\in\{0.05,0.01\}$, while Muon takes
$\alpha_k=\bar\alpha_k$. Finally, the rate depends on the initial regularizer
gap $R_\star$. For $h(x)=\|x-z^0\|^2$, we have
$R_\star=\|x^\star-z^0\|^2$, so choosing $z^0=0$ can be a poor anchor when the
solution is far from the origin. This motivates the centered choice in
\Cref{sec:method}.
\paragraph{Acceleration}
\Cref{cor:soda-acc} shows that SODA also admits an accelerated parameterization following \citet[Cor.~1]{defazio2024road}.
This is obtained by choosing
$\alpha_k = \nicefrac{2}{k+2}$,
$\bar\alpha_k=\lambda_k=\nicefrac{2}{k+3}$, and
$\bar\lambda_k \le \nicefrac{\lambda_k}{10}$.
Equivalently, this accelerated regime corresponds to using increasing weights
$a_k = k+1$ for $k\ge0$ and taking
$\alpha_k = \nicefrac{a_k}{\sum_{i=0}^k a_i}$ and
$\lambda_k = \nicefrac{a_{k+1}}{\sum_{i=0}^{k+1}a_i}$.
We further comment in \Cref{app:acceleration}.

\paragraph{Limitations}
Our analysis is convex. Although convex theory often remains empirically
informative for deep learning, as observed by
\citet{defazio2024road,schaipp2025surprising} and in our experiments, extending
the guarantees to less restrictive assumptions is an important direction.
A second limitation is the strong convexity requirement on the regularizer. For
a regularizer $h$ that is not strongly convex, one can instead use
$h_\tau(x):=h(x)+\tau\psi(x)$, where $\psi$ is $1$-strongly convex with respect
to the chosen norm and $\tau>0$. Substituting $h_\tau$ for $h$ and $\tau$ for
$\mu$ preserves the same $O(n^{-1/2})$ dependence on $n$, but at the cost of a
$1/\sqrt{\tau}$ factor in the constant.

\begin{toappendix}
\subsection{Accelerated rate}\label{app:acceleration}
\begin{correp}[Accelerated parameterization]
\label{cor:soda-acc}
Let $x^\star\in\argmin_{x\in\mathcal X} f(x)$ and let
$R_\star:=h(x^\star)-\inf h$. Consider
\ref{eq:SODA} with a fixed regularizer $h_k \equiv h$. For every
$k=0,\dots,n-1$, choose
\begin{align*}
\alpha_k = \tfrac{2}{k+2},
\qquad
\bar\alpha_k = \lambda_k = \tfrac{2}{k+3},
\qquad 
\bar\lambda_k \le \tfrac{\lambda_k}{10},
\\
\gamma_k = \eta \tfrac{(k+2)(k+3)}{2},
\qquad
\eta = \min\left\{\tfrac{\mu}{12\rho L}, \tfrac{\sqrt{\mu R_\star}}{\sigma n^{3/2}}\right\}.
\end{align*}
Assume \Cref{ass:sample-convex,ass:soda-stochastic,ass:smooth,ass:noise},
and that $h$ is $\mu$-strongly convex with respect to $\norm{\cdot}$.
Then, for every $n\ge 1$,
\[
\mathbb E[f(x^{n-1})-f(x^\star)]
=
O\left(
\frac{(\rho+1)LR_\star}{\mu n^2}
+
\frac{\sigma\sqrt{R_\star}}{\sqrt{\mu n}}
\right).
\]
\end{correp}
\begin{remark}
\Cref{cor:soda-acc} suggests that optimism ($\bar\alpha_k$) and primal averaging ($\lambda_k$) should be scheduled in tandem to obtain acceleration.
In the context of deep learning, this suggests that optimism, which is commonly referred to as Nesterov momentum, does not lead to acceleration on its own, but should be scheduled in combination with the weight decay ($\lambda_k$).
We leave this as interesting future work.
\end{remark}
\begin{proof}
Choose the weights $a_k = k+1$ for $k=0,\dots,n-1$. Then
\[
A_k=\textstyle\sum_{i=0}^{k} a_i=\tfrac{(k+1)(k+2)}{2},
\qquad
\frac{a_k}{A_k}=\frac{2}{k+2},
\qquad
\frac{a_{k+1}}{A_{k+1}}
=\frac{2}{k+3}.
\]
For $k=0,\dots,n-2$, the displayed parameter choices therefore satisfy
\[
\begin{split}
    \alpha_k=\tfrac{a_k}{A_k},
    \quad
    \lambda_k=\tfrac{a_{k+1}}{A_{k+1}},
    \quad 
    \bar\lambda_k\le \tfrac{a_{k+1}}{10A_{k+1}}, 
    \bar\alpha_k=\tfrac{a_{k+1}}{A_{k+1}}, 
    \gamma_k=\eta \tfrac{A_k}{1-\bar\alpha_k},
\end{split}
\]
\Cref{lem:oda-regret} and \Cref{lem:soda-otb-smooth} give
\begin{align*}
A_{n-1} \mathbb E[f(x^{n-1})-f(x^\star)]
&\le
\tfrac{R_\star}{\eta}
+
\tfrac{\eta}{2\mu}\textstyle\sum_{k=0}^{n-1} (k+1)^2
\mathbb E\!\left[\norm{g^k-g^{k-1}}_*^2\right]
\\
&\quad -
\tfrac{1}{6L}\textstyle\sum_{k=0}^{n-1} A_{k-1}
\mathbb E\!\left[\norm{\nabla f(y^k)-\nabla f(y^{k-1})}_*^2\right].
\end{align*}
Using \Cref{ass:noise} with the boundary conventions $g^{-1}=0$ and
$\nabla f(y^{-1})=\nabla f(x^\star)$, the startup term satisfies
\[
\mathbb E\!\left[\norm{g^0-g^{-1}}_*^2\right]
\le
\rho\,\mathbb E\!\left[\norm{\nabla f(y^0)-\nabla f(x^\star)}_*^2\right]
+
\sigma^2.
\]
For $k=1,\dots,n-1$, the same assumption controls the gradient differences,
and we obtain
\[
\begin{aligned}
A_{n-1} \mathbb E[f(x^{n-1})-f(x^\star)]
&\le
\tfrac{R_\star}{\eta}
+
\tfrac{\eta\rho}{2\mu}
\mathbb E\!\left[\norm{\nabla f(y^0)-\nabla f(x^\star)}_*^2\right]
+
\tfrac{\eta\sigma^2}{2\mu}\textstyle\sum_{k=0}^{n-1} (k+1)^2
\\
&\qquad
+
\textstyle\sum_{k=1}^{n-1}
\left(
\tfrac{\eta \rho}{2\mu}(k+1)^2
-
\tfrac{A_{k-1}}{6L}
\right)
\mathbb E\!\left[\norm{\nabla f(y^k)-\nabla f(y^{k-1})}_*^2\right],
\end{aligned}
\]
where we used
$A_{k-1}=\tfrac{k(k+1)}{2}\ge \tfrac{(k+1)^2}{4}$ for $k\ge 1$.
By the stated choice of $\eta$, the coefficient of the final term is
nonpositive, so it can be dropped. Since
\[
A_{n-1} = \Theta(n^2),
\qquad
\textstyle\sum_{k=1}^{n-1} (k+1)^2 = O(n^3),
\]
it follows that
\[
\mathbb E[f(x^{n-1})-f(x^\star)]
=
O\left(
\tfrac{R_\star}{\eta n^2}
+
\tfrac{\eta\sigma^2 n}{\mu}
+
\tfrac{\eta\rho}{\mu n^2}
\mathbb E\!\left[\norm{\nabla f(y^0)-\nabla f(x^\star)}_*^2\right]
\right).
\]
Now set
\[
\eta_1 := \tfrac{\mu}{12\rho L},
\qquad
\eta_2 := \tfrac{\sqrt{\mu R_\star}}{\sigma n^{3/2}},
\qquad
\eta = \min\{\eta_1,\eta_2\}.
\]
If $\eta=\eta_1$, then
\[
\tfrac{R_\star}{\eta n^2}
=
O\!\left(\tfrac{\rho LR_\star}{\mu n^2}\right),
\qquad
\tfrac{\eta \sigma^2 n}{\mu}
=
O\!\left(\tfrac{\sigma^2 n}{\rho L}\right)
\le
\tfrac{\sigma\sqrt{R_\star}}{\sqrt{\mu n}},
\]
because $\eta_1\le\eta_2$ implies
$\sigma n^{3/2}=O(\rho L\sqrt{R_\star/\mu})$.
The startup term is
$\eta\rho\mathbb E[\norm{\nabla f(y^0)-\nabla f(x^\star)}_*^2]/(\mu n^2)
=O(LR_\star/(\mu n^2))$.
If instead $\eta=\eta_2$, then
\[
\tfrac{R_\star}{\eta n^2}
=
\tfrac{\eta \sigma^2 n}{\mu}
=
\tfrac{\sigma\sqrt{R_\star}}{\sqrt{\mu n}},
\]
and the startup term is again $O(LR_\star/(\mu n^2))$ since
$\eta\le\eta_1$.
Combining the two cases gives the claim.
\end{proof}

\end{toappendix}

\begin{toappendix}
\section{Experiments}\label{app:experiments}
\end{toappendix}
\section{Experiments}\label{sec:experiments}
\nosectionappendix

\begin{figure}[t]
    \centering
    \includegraphics[width=0.45\linewidth]{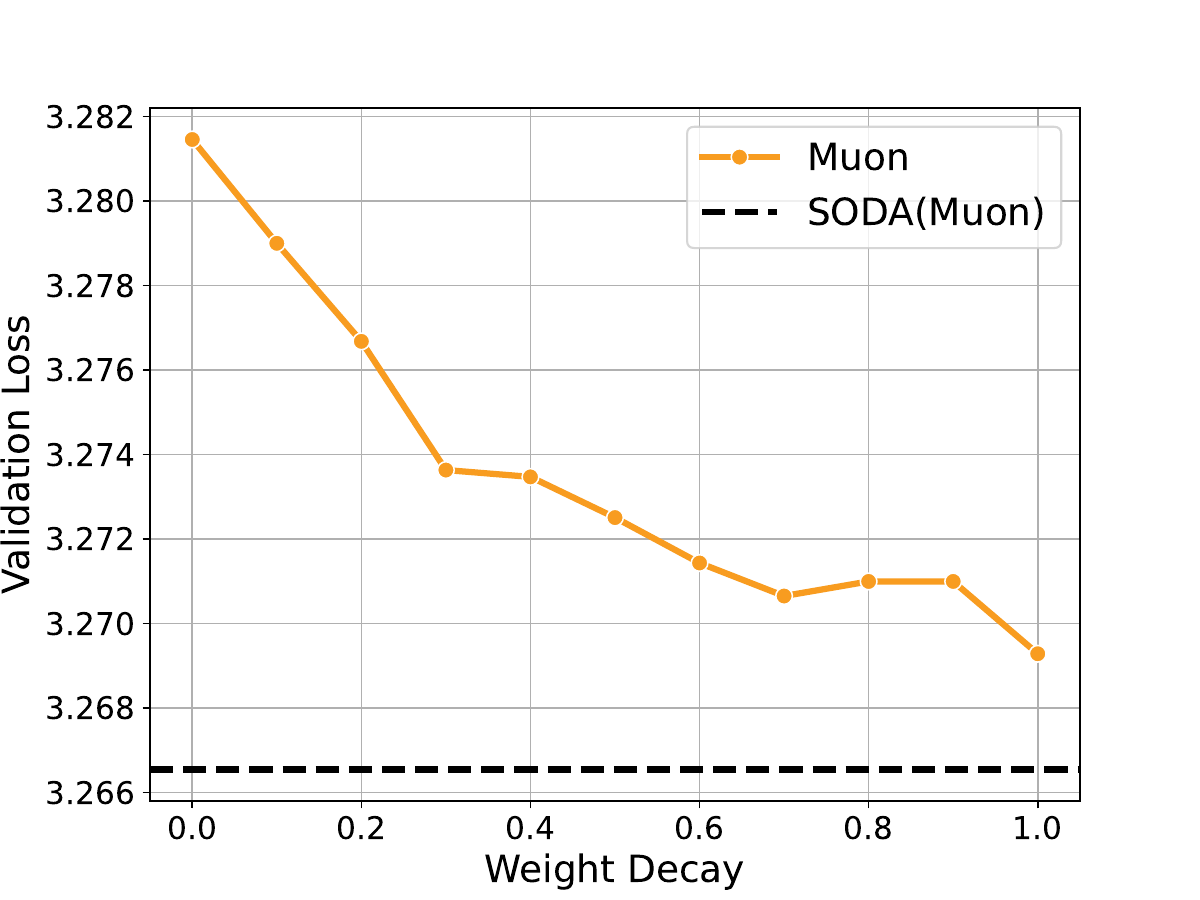}
    \hfill
    \includegraphics[width=0.45\linewidth]{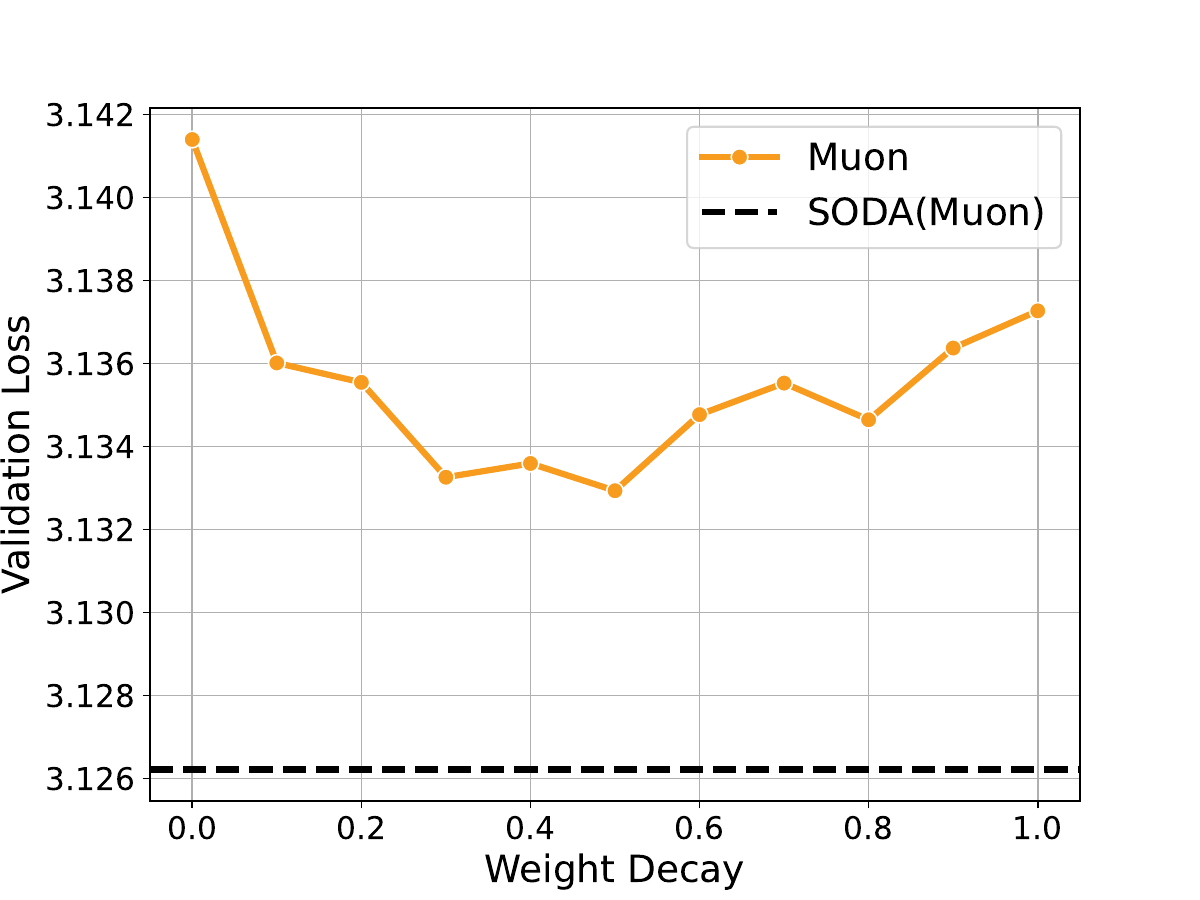}
    \caption{Muon with swept weight decay is outperformed by SODA(Muon), without any additional tuning, on 124M models trained for both $1 \times$ Chinchilla steps (left) and $4 \times$ Chinchilla steps (right).
    }
    \label{fig:wd_sweep}
\end{figure}

\begin{figure}[t]
    \centering
    \includegraphics[width=0.45\linewidth]{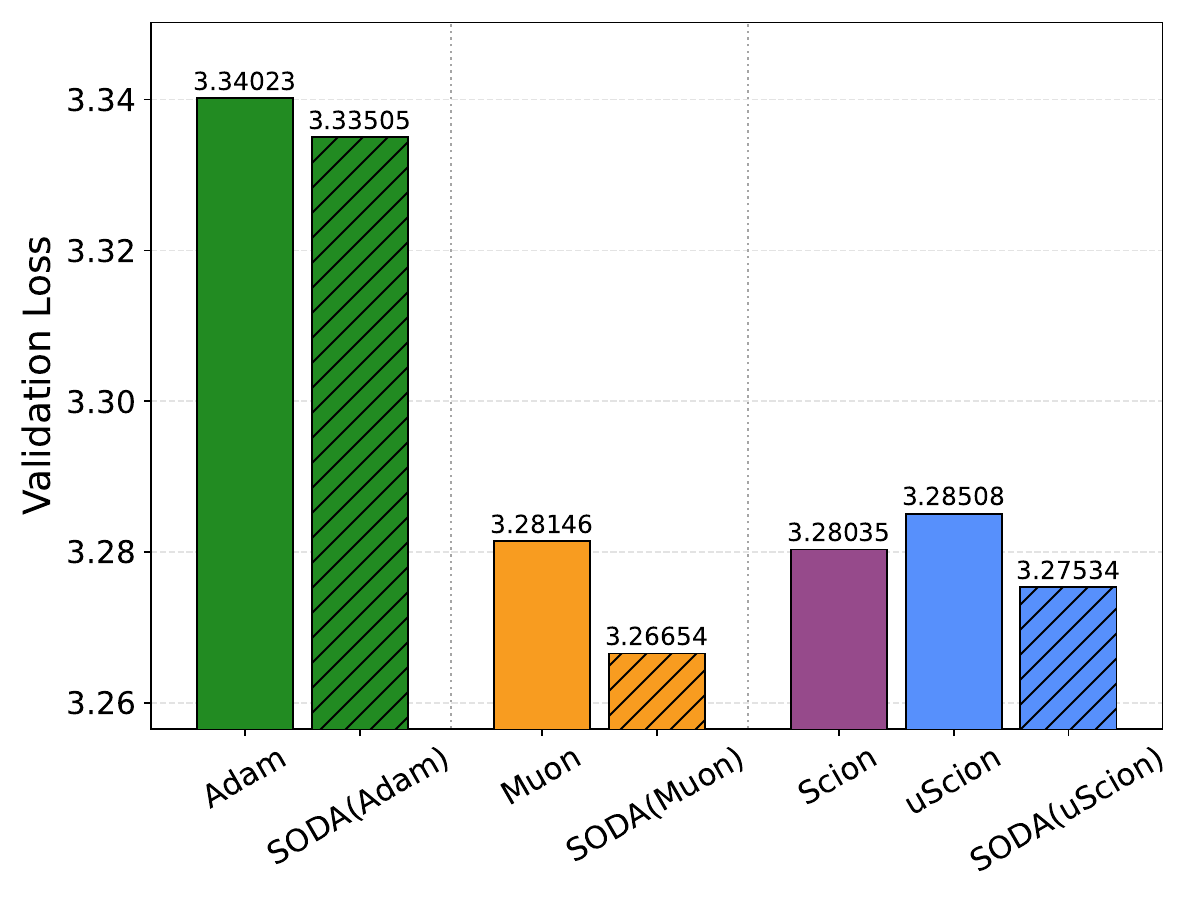}
    \hfill
    \includegraphics[width=0.45\linewidth]{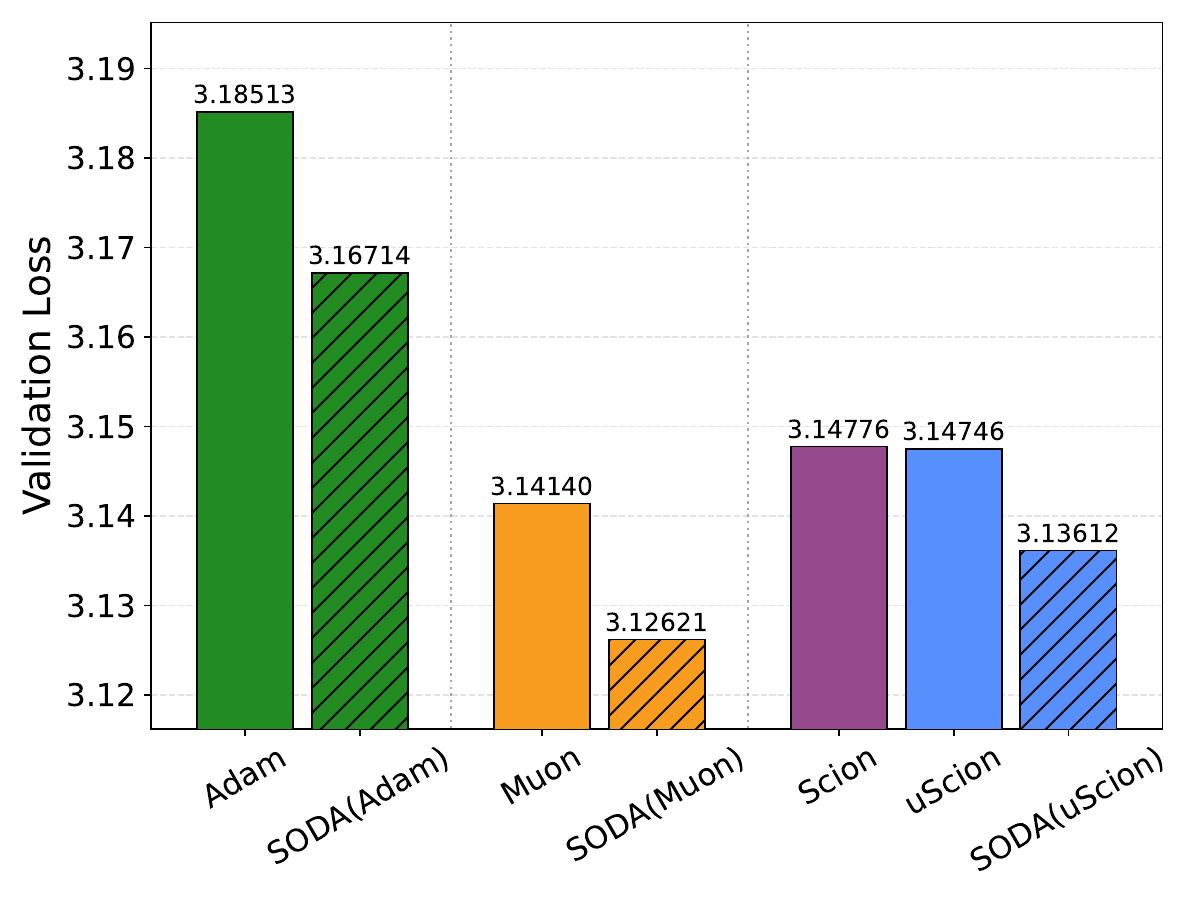}
    \caption{The SODA wrapper yields consistent improvement across various base optimizers without any additional tuning as illustrated on 124M model trained for both $1 \times$ Chinchilla steps (left) and $4 \times$ Chinchilla steps (right).
    }
    \label{fig:wrapper_optimizers}
\end{figure}

\begin{figure}[t]
    \centering
    \includegraphics[width=0.49\linewidth]{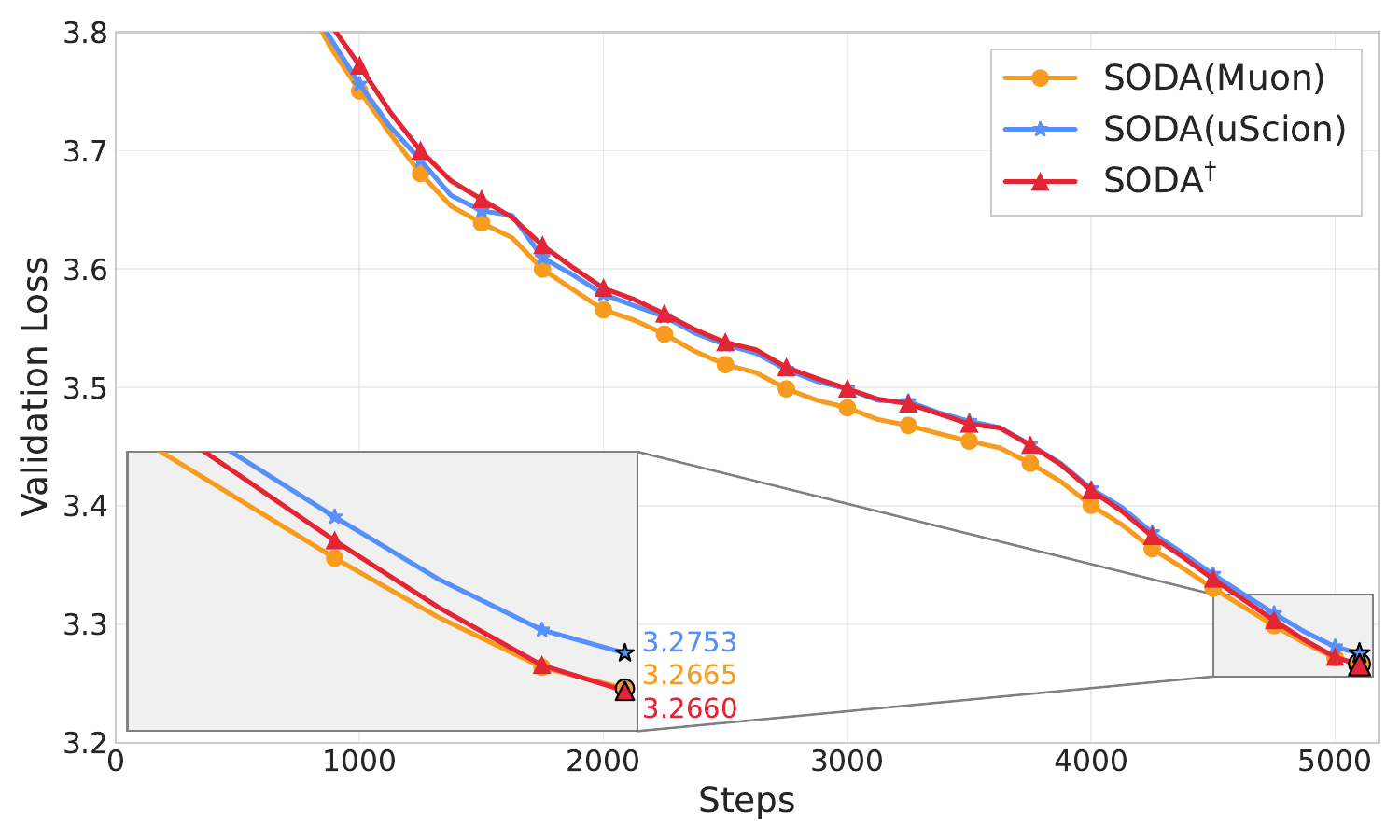}
    \hfill
    \includegraphics[width=0.49\linewidth]{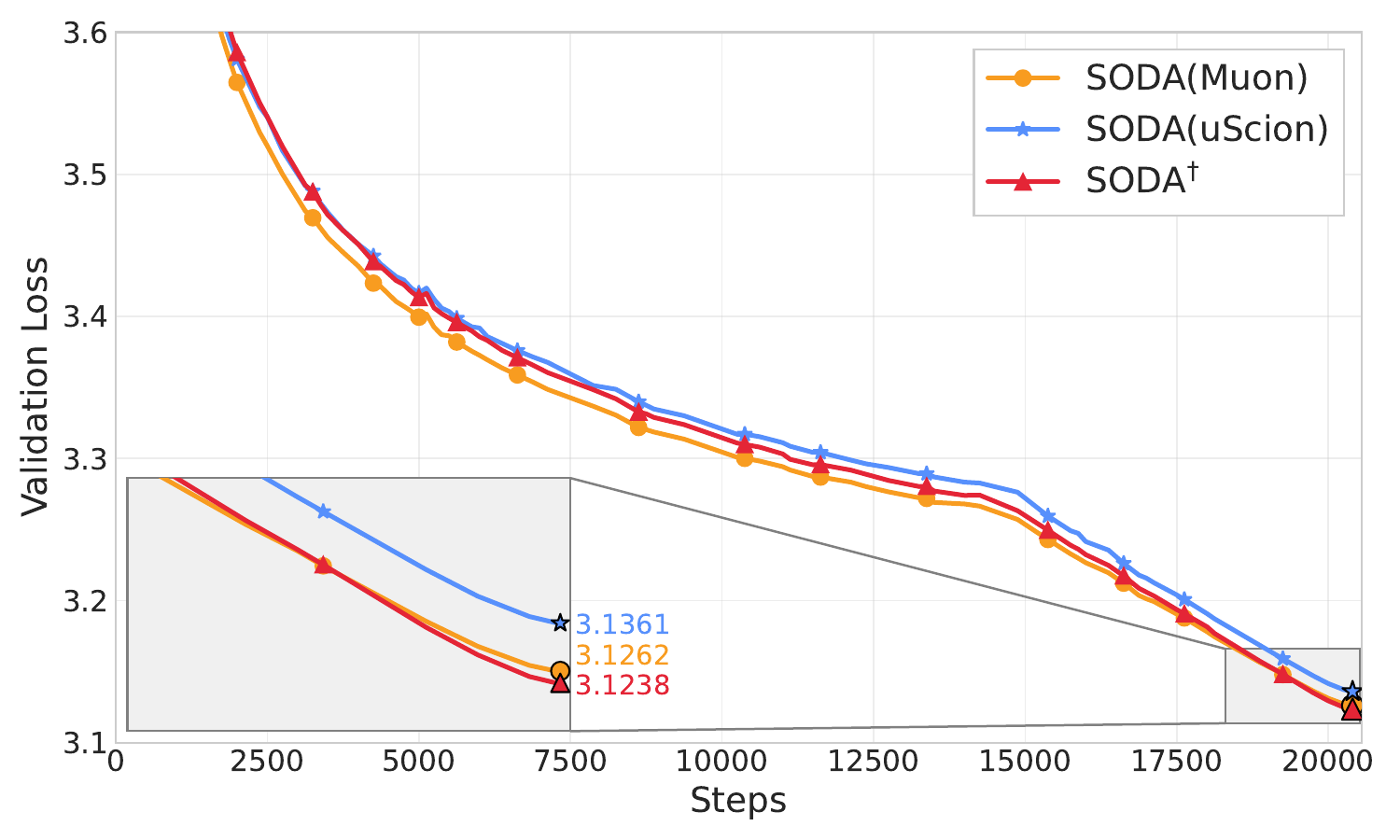}
    \caption{
    SODA with optimism (referred to as SODA$\dag$) is competitive with the best wrapped optimizer.
    In comparison with SODA(Muon), the configuration simplifies the method by replacing Adam with Lion and reusing the same hyperparameters for the momentum across all layers.
    }
    \label{fig:val_loss_trajectory}
\end{figure}

Throughout the experiments, Muon refers to the official implementation, which uses Adam for the first and last layer and Nesterov momentum for hidden layers (corresponding to optimism in \ref{eq:SODA} with $\bar\alpha_k=\alpha_k$).
The Scion optimizer, on the other hand, disables the Nesterov momentum ($\bar\alpha_k=0$) and uses Signum instead of Adam for the first and last layer.
Experiments are conducted on nanoGPT on FineWeb100 (see \Cref{tbl:hyperparams:nanoGPT} for full details).
In the experiments, the token budget is expressed in units of Chinchilla \citep{hoffmann2022training}, where $1\times$ Chinchilla corresponds to $20\times \#(\text{parameters})$.

\vspace{-0.5em}
\paragraph{Transfer across horizon}
Zero-shot hyperparameter transfer results, most notably $\mu$P, show that learning rates and related optimization hyperparameters can transfer reliably across width \citep{yang2022tensor}. 
However, these transfer results are typically demonstrated at a fixed training horizon. This leaves open the more practically important question of how weight decay should transfer across horizon, since for longer runs the optimal weight decay typically changes. %

To isolate the effect of the horizon, we keep the model width fixed and vary only the training horizon. 
We find that the optimal weight decay \emph{decreases} with the horizon for Muon (\textit{c.f.} \Cref{fig:wd_sweep}),
thus demonstrating that the $1/\texttt{model\_size}$ choice made in \citet{xiao2024rethinking,qiu2025hyperparameter}, which is constant in the horizon, would be suboptimal outside the Chinchilla scaling rule where the model size and horizon is scaled proportionally \citep{hoffmann2022training}.

Furthermore, we find that the theoretically motivated $1/(k+2)$ weight decay scaling used by the SODA wrapper (\Cref{alg:SODA}) consistently leads to further improvement without requiring any additional tuning. In \cref{fig:wd_sweep}, SODA outperforms the base optimizer (Muon) even when the latter is given its best-tuned weight decay, showing that the gain is not simply due to a favorable retuning of that hyperparameter. Taken together, these results suggest that SODA provides a principled mechanism for transferring weight decay across horizon without the need for tuning the weight decay even of the smaller proxy model.
The SODA wrapper relies on a non-zero center iterate $z^0$, which we further ablate the importance of in \Cref{fig:val_loss_z0}.

\paragraph{SODA Wrapper}
The benefit of the SODA Wrapper (\Cref{alg:SODA}) is not exclusive to Muon. 
We additionally apply the wrapper across Adam and Scion and observe consistent improvements across training horizons in \Cref{fig:wrapper_optimizers}, notably without introducing any additional tuning.

\begin{wrapfigure}{r}{0.48\textwidth}
    \vspace{-0.5em}
    \centering
    \includegraphics[width=\linewidth]{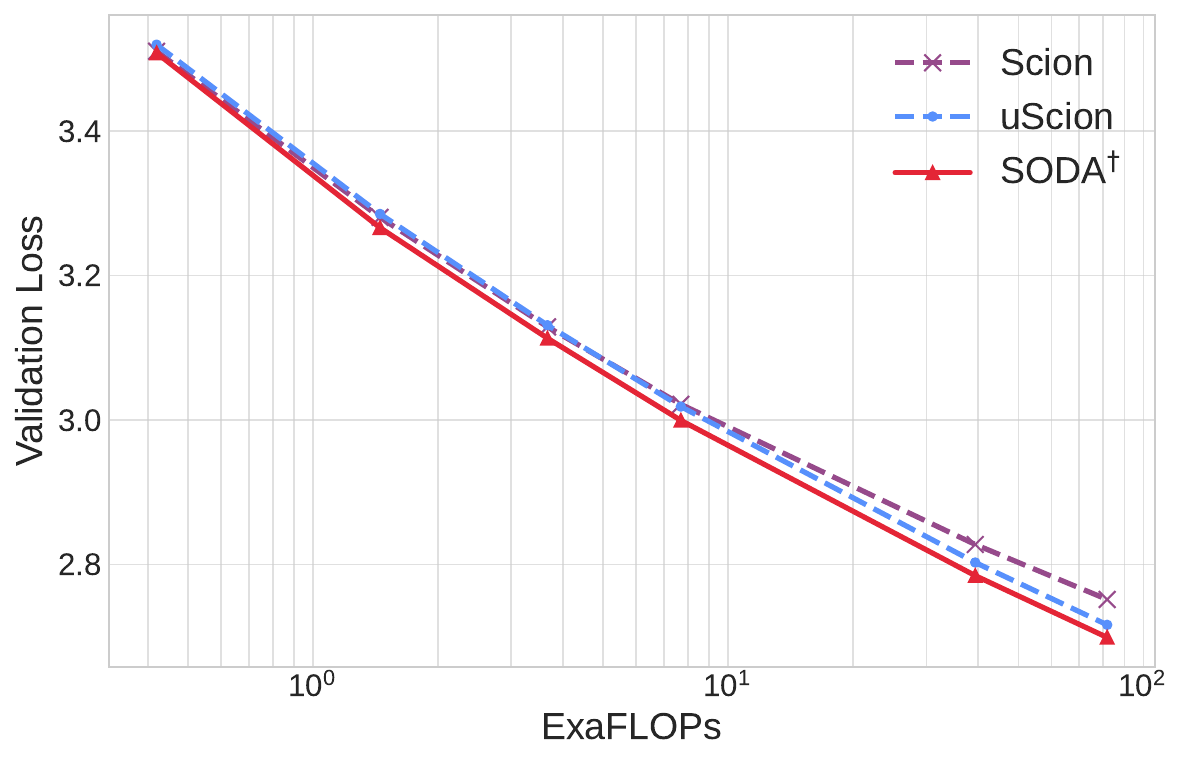}
    \caption{SODA is effective under 1$\times$ Chinchilla scaling and the benefit increases with scale.}
    \label{fig:scale_width_horizon}
\end{wrapfigure}

\vspace{-0.5em}
\paragraph{Optimism and SODA$^\dag$}
Considering the benefit of optimism (Muon) in the overtrained regime of \Cref{fig:wrapper_optimizers} we systematically investigate the impact of optimism in SODA, reported in \Cref{tab:optimism} of \Cref{app:experiments}.
We use the $\ell_\infty$-norm for the input and output layer and spectral norm for hidden layers following \citet{pethick2025trainingdeeplearningmodels}.
SODA with optimism, which borrows the same optimized configuration as the Muon baseline ($\alpha_k=\bar \alpha_k=0.05$), shows the best performance under both $1\times$ and $4\times$ Chinchilla.
This SODA configuration notably removes the use of Adam in Muon and uses the same momentum hyperparameter across all layers in contrast, greatly simplifying tuning.
We mark SODA with this specific optimistic setting as SODA$^\dag$. 
\Cref{fig:val_loss_trajectory} shows validation loss compared with the best two settings found in \Cref{fig:wrapper_optimizers}.

\vspace{-0.5em}
\paragraph{Transfer across horizon \& width}
In \cref{fig:scale_width_horizon}, we test SODA$^\dag$ on NanoGPT with between 64M and 1B parameters following Chinchilla-style scaling, where width and horizon grow proportionally. We choose (u)Scion as the base optimizer since its stepsize can be transferred along the width and horizon \citep{pethick2025trainingdeeplearningmodels}.
SODA$^\dag$ consistently outperforms Scion/uScion with different sizes of models and the gap becomes more apparent as the model grows larger.
A similar conclusion holds for SODA(uScion) (\textit{c.f.} \Cref{fig:scale_width_horizon_supp}). 

\begin{toappendix}
\subsection{Implementation}\label{app:implementation}

\paragraph{Norm choice}
Consider a neural network parameterized by weight matrices $W_1, \dots, W_L$, where $W_1$ is the first layer and $W_L$ is the last layer. 
Following \citet{pethick2025trainingdeeplearningmodels}, we use the max-norm across layers
\begin{equation*}
\|x\| = \max_l \tfrac{1}{\rho_l}\|W_l\|_{\alpha_l \rightarrow \beta_l}
\end{equation*}
where $\rho_l$ is the layerwise radii and $\|\cdot\|_{\alpha \rightarrow \beta}:= \sup_{x \neq 0} \|Ax\|_\beta/\|x\|_\alpha$ is an operator norm.
Under the max-norm the $\lmo$ decomposes layerwise.
Different layerwise operator norms can be found in \Cref{tbl:parameter:lmo:same-norm}.
If not otherwise noted, we use the operator norm choice $(\text{Sign} \rightarrow \text{Spectral} \rightarrow \text{Sign}$) for both Scion and SODA, which refers to using Sign for the first layer, Spectral for intermediary layers and Sign for the last layer.

\begin{table*}[t]
\centering
\caption{
  The linear minimization oracle ($\lmo$) for various norm choices.
  Define $\operatorname{col}_j(A):=[A]_{\cdot,j}$ and $\operatorname{row}_i(A):=[A]_{i,\cdot}$.
}
\label{tbl:parameter:lmo:same-norm}
\bgroup
\def\arraystretch{1.2}
\resizebox{\textwidth}{!}{
\begin{tabular}{|c|c|c|c|c|}
\hline
Weight matrix norm
    & $W_1$ (1-hot encoded) 
    & $W_1$ (image domain) 
    & $(W_\ell)_{\ell \in [2,...,L-1]}$
    & $W_L$
    \\
\hline
\hline
Spectral 
   ($\RMS \rightarrow \RMS$)
   & $-\sqrt{d_\mathrm{out}} UV^\top$
   & \multicolumn{3}{c|}{$-\sqrt{\nicefrac{d_\mathrm{out}}{d_\mathrm{in}}} UV^\top$}
 \\
 \hline
ColNorm 
  ($1 \rightarrow \RMS$)
  & $\operatorname{col}_j(W_1)\mapsto -\sqrt{d_\mathrm{out}}\tfrac{\operatorname{col}_j(W_1)}{\|\operatorname{col}_j(W_1)\|_2}$
  & \multicolumn{3}{c|}{$\operatorname{col}_j(W_\ell)\mapsto -\tfrac{\sqrt{d_\mathrm{out}}}{d_\mathrm{in}}\tfrac{\operatorname{col}_j(W_\ell)}{\|\operatorname{col}_j(W_\ell)\|_2}$}
  \\
\hline
RowNorm 
    ($\RMS \rightarrow \infty$)
   & $\operatorname{row}_i(W_1)\mapsto -\tfrac{\operatorname{row}_i(W_1)}{\|\operatorname{row}_i(W_1)\|_2}$
   & \multicolumn{3}{c|}{$\operatorname{row}_i(W_\ell)\mapsto -\tfrac{\operatorname{row}_i(W_\ell)}{\sqrt{d_\mathrm{in}}\|\operatorname{row}_i(W_\ell)\|_2}$}
 \\
 \hline
Sign
    ($1 \rightarrow \infty$)
  & $-\sign(W_1)$
  & \multicolumn{3}{c|}{$-\tfrac{1}{d_\mathrm{in}}\sign(W_\ell)$}
\\
\hline
\end{tabular}
}
\egroup
\end{table*}

\paragraph{SODA Wrapper}
For ease of use, the provided implementation of the SODA Wrapper (\Cref{alg:SODA}) can be used directly as a wrapper around any existing PyTorch optimizer. 
However, this ease of use comes at the cost of requiring additional memory since we need to recover the update delta $x^{k+1} - x^{k}$, which requires storing the previous iterate $x^{k}$.
For production, we therefore recommend one of three options:
\begin{enumerate}[label=(\roman*)]
\item modify the base optimizer to directly return the update delta $x^{k+1} - x^{k}$, 
\item directly implement the simplified wrapper expression \eqref{eq:SODA:wrapper:simplified} inside the base optimizer,
\item or use the \ref{eq:SODA} algorithm, which involves the dual momentum and the mirror map.
\end{enumerate}
For Adam, which uses stepsize warmup the SODA wrapper is only applied after the warmup phase.

\paragraph{Layer choice}
For all SODA Wrapper experiments, we wrap only the hidden layers, keeping the first/last layers without weight decay, following the choice of modded-nanogpt, which only sets weight decay for hidden layers. 
We further test this layer choice in \Cref{tab:wrapper_layers}.

For SODA(Muon), wrapping only hidden layers is consistently better than wrapping all layers under both $1\times$ and $4\times$ Chinchilla.
On the other hand, when using uScion with the norm choice ($\text{Sign} \rightarrow \text{Spectral} \rightarrow \text{Sign}$) as the base optimizer, wrapping all layers is competitive with wrapping only hidden layers.
It appears that the $1/(k+2)$ weight decay introduced by the SODA Wrapper improves performance in hidden layers, which benefit from weight decay, while not harming the first and last layers, which are typically more sensitive to it.
This suggests that also the non-wrapped version SODA with ($\text{Sign} \rightarrow \text{Spectral} \rightarrow \text{Sign}$) can safely be applied to all layers as done in all subsequent experiments.

\paragraph{SODA}
The implementation of \ref{eq:SODA} makes the choice $h_k(x) = \iota_{\mathcal{B}(z^0,\gamma_k)}(x)$ where $\mathcal{B}(z,\gamma)=\{x \in \mathcal X \mid \|x-z\|\leq \gamma\}$ which reduces the update to
\begin{equation}\label{eq:SODA:implementation}
\begin{split}
m^{k+1} &= (1-\alpha_k) m^k + \alpha_k \nabla f(y^k,\xi_k) \\
\bar{m}^{k+1} &= (1-\bar{\alpha}_k) m^{k+1} + \bar{\alpha}_k \nabla f(y^k, \xi_k) \\
z^{k+1} & \in z^0 + \gamma_k \lmo_{\mathcal B(0,1)}(\bar{m}^{k+1}) \\ 
x^{k+1} &= (1-\lambda_k)x^k + \lambda_k z^{k+1} \\
y^{k+1} &= (1 - \bar\lambda_k) x^{k+1}  + \bar\lambda_k z^{k+1} 
\end{split}
\end{equation}
with $\alpha_k, \bar\alpha_k, \bar\lambda_k \in [0,1]$, $\lambda_k=1/(k+2)$ and $\gamma_k = \eta_k(k+2)$ similar to \Cref{alg:SODA}.
We initialize $m^0=\nabla f(x^0,\xi_0)$ and $x^0=y^0=z^0$.

For simplicity, consider $\bar \alpha_k=0$ and $\bar\lambda_k=0$, which disable optimism and enables primal extrapolation.
In the context of Frank-Wolfe based methods such as Scion \citep{pethick2025trainingdeeplearningmodels}, SODA then corresponds to i) centering the update around the initialization $z^0$ instead of the origin, and ii) setting the Frank-Wolfe stepsize $\lambda_k=1/(k+2)$ while the radius $\gamma_k=\eta_k (k+2)$ is linearly increasing.

We initialize the momentum as $m^0=\nabla f(x^0,\xi_0)$ following theory, which corresponds to using \texttt{torch.clone(p.grad)}.
We observe that this performs slightly better than the typical $m^0=0$, i.e. \texttt{torch.zeros\_like(p.grad)}, when it comes to SODA.

\paragraph{Setup}
We rely on the tuned baselines for \href{https://github.com/KellerJordan/modded-nanogpt/blob/master/records/track_1_short/2024-10-29_Optimizers/95a9fd44-7c13-49c7-b324-3e7d9e23a499.txt}{AdamW} and \href{https://github.com/KellerJordan/modded-nanogpt/blob/master/records/track_1_short/2024-10-17_DistributedMuon/22d24867-eb5a-4fcc-ae2c-263d0277dfd1.txt}{Muon} from modded-nanogpt \citet{modded_nanogpt_2024} used in \citet{pethick2025trainingdeeplearningmodels}.
For the baselines weight decay refers to setting $\mu_k = \mu \gamma_k$ in \eqref{eq:wd} for weight decay parameter $\mu \geq 0$ and (possibly scheduled) stepsize $\gamma_k$, as is the default in e.g., PyTorch.
Training a 124M model on FineWeb100 for 5k steps (i.e., $1\times$ Chinchilla) takes roughly 20-30min on 4 H200 NVIDIA GPUs.

\paragraph{Varied horizon}
In \Cref{fig:scale_width_horizon}, the learning rate of the baseline is tuned in additional to the weight decay parameter in a grid of powers of $2$ to ensure optimality.
We find that the optimal learning rate is the same across $1\times$ Chinchilla and $4\times$ Chinchilla.

\paragraph{Optimism} \Cref{tab:optimism} explores the role of optimism showing results for different choices of $\alpha_k, \bar\alpha_k$ in \ref{eq:SODA}.
For the remaining hyperparameters we use the same default choices as for the SODA Wrapper, namely $\lambda_k=1/(k+2)$ (uniform averaging) and $\bar\lambda_k=0$ (primal extrapolation).

\begin{table}[ht]
    \centering
    \caption{NanoGPT Validation loss comparing wrapping hidden layers vs. wrapping all layers.}
    \resizebox{\textwidth}{!}{
        \begin{tabular}{l|ccc|cccc}
            \toprule
            Steps & Muon & SODA(hidden) & SODA(all) & Scion & uScion & SODA(hidden) & SODA(all) \\
            \midrule
            5k & 3.28146 & 3.26654 & 3.27493 & 3.28035 & 3.28508 & 3.27534 & 3.27582 \\
            20k & 3.14140 & 3.12621 & 3.12933 & 3.14776 & 3.14746 & 3.13622 & 3.13196 \\
            \bottomrule
            \end{tabular}
    }
        \label{tab:wrapper_layers}
\end{table}
\end{toappendix}

\begin{toappendix}

\begin{table}[H]
    \centering
    \caption{NanoGPT validation loss comparing different values of the dual momentum parameters $\alpha_k, \bar\alpha_k$ for \ref{eq:SODA}.}
    \begin{tabular}{c|cc}
    \toprule
        & 5k steps & 20k steps \\
        \midrule
        $\alpha_k=0.05, \bar\alpha_k=0$ & 3.27613 & 3.13250 \\
        $\alpha_k=0.05, \bar\alpha_k=0.05$ & \textbf{3.26596} & \textbf{3.12376} \\
        $\alpha_k=0.05, \bar\alpha_k=0.1$ & 3.27523 & 3.12543 \\
        $\alpha_k=0.1, \bar\alpha_k=0$ & 3.27582 & 3.13192  \\
        $\alpha_k=0.1, \bar\alpha_k=0.05$ & 3.27171 & 3.12981 \\
        $\alpha_k=0.1, \bar\alpha_k=0.1$ & 3.27903 & 3.13087 \\
    \bottomrule
    \end{tabular}
    \label{tab:optimism}
\end{table}

\begin{figure}[t]
    \centering
    \includegraphics[width=0.5\linewidth]{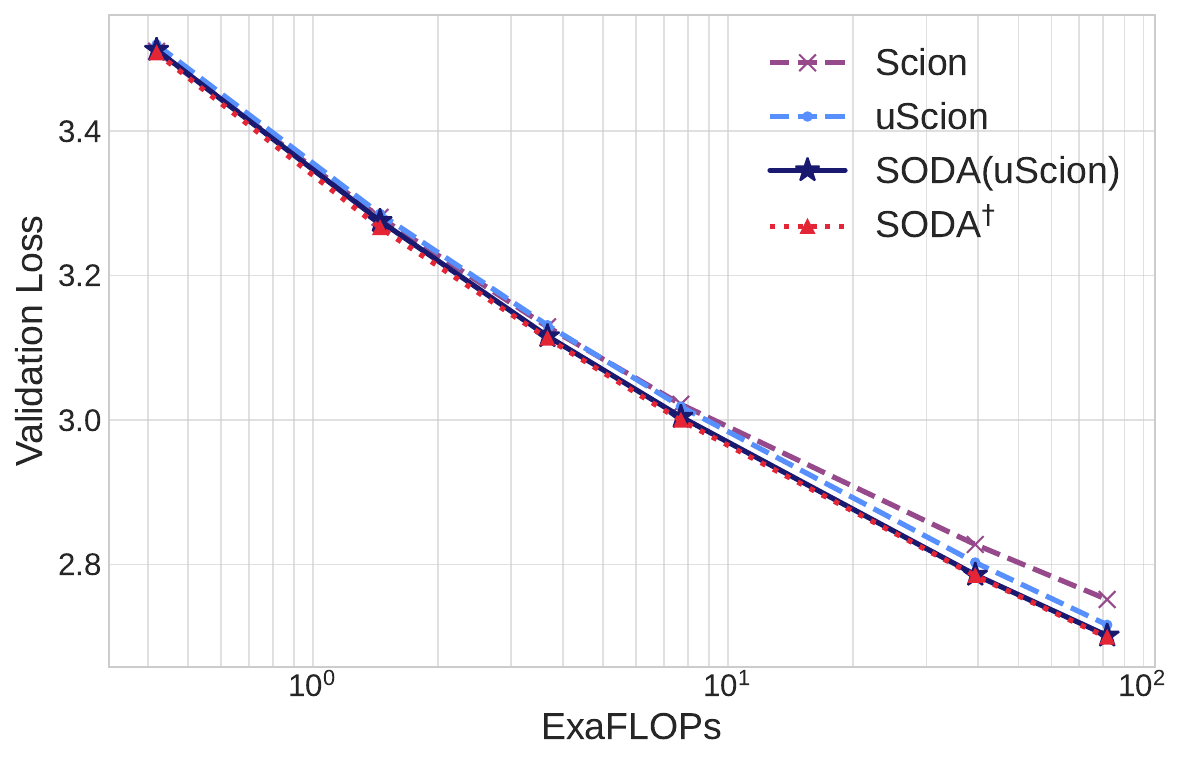}
    \caption{Both SODA$\dag$ and SODA(uScion) are effective under 1$\times$ Chinchilla scaling and the benefit increases with scale.}
    \label{fig:scale_width_horizon_supp}
\end{figure}

\begin{figure}[t]
    \centering
    \includegraphics[width=0.5\linewidth]{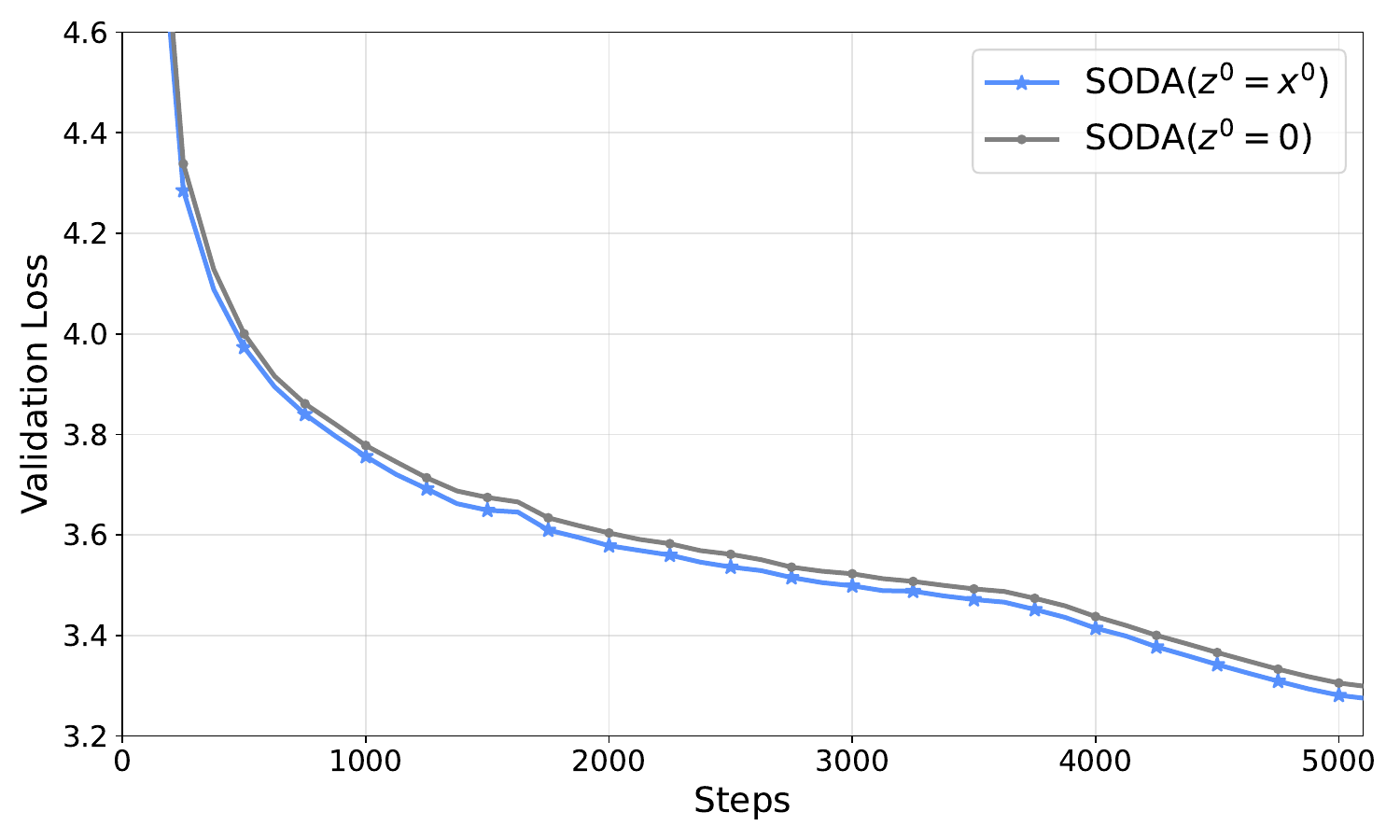}
    \caption{To illustrate the importance of the reference point $z^0$ being the initialization, we instantiate SODA with $z^0=0$, where we observe deterioration in performance.
    This is consistent with theory, since the rate in e.g., \Cref{cor:soda-nonacc-general} depend on the distance to the anchor point, $z^0$, rather than the initial point where the gradient is evaluated, $x^0$.}
    \label{fig:val_loss_z0}
\end{figure}

\begin{table}[!h]
\centering
\caption{For NanoGPT we use the optimized hyperparameters for the 124M model from \citep{pethick2025trainingdeeplearningmodels}.
We use FineWeb100 to scale to the overtrained regime while avoiding data repetition.
For ease of comparison Muon's momentum parameter $\beta$ is written in terms of $\alpha=\bar\alpha$ (where $\beta=1-\alpha$).
SODA$^\dag$ uses $\bar\lambda=0$ and $\lambda_k=1/(k+2)$ similar to the SODA Wrapper.
Both SODA$^\dag$ and Scion use the same norm choice, $\text{Sign} \rightarrow \text{Spectral} \rightarrow \text{Sign}$ (\textit{c.f.} \Cref{tbl:parameter:lmo:same-norm}).}
\label{tbl:hyperparams:nanoGPT}

\begin{tabular}{|c|c|c|c|c|c|}
\hline 
Hyperparameter & AdamW & Muon$\textcolor{blue}{^1}$ & uScion & Scion & SODA$^\dag$ \\
\hline \hline 

Layers & \multicolumn{5}{c|}{12} \\
Head dim & \multicolumn{5}{c|}{128} \\
\hline

Activation function & \multicolumn{2}{c|}{ReLU$^2$} & \multicolumn{3}{c|}{2 $\cdot$ ReLU$^2$} \\
\hline

Vocabulary size & \multicolumn{5}{c|}{50304} \\
Dataset & \multicolumn{5}{c|}{FineWeb100} \\
Batch size & \multicolumn{5}{c|}{512} \\
Block size & \multicolumn{5}{c|}{1024} \\
Iterations $n$ & \multicolumn{5}{c|}{5100 ($1 \times$ Chinchilla)} \\
Warmdown $m$ & \multicolumn{5}{c|}{28.5\% of $n$} \\

Stepsize schedule & \multicolumn{5}{c|}{
$\gamma_k =
\begin{cases}
\gamma & \text{if } k < n-m \\
\gamma \cdot \left(\frac{n - k}{m}\right) & \text{if } k \geq n-m
\end{cases}$
} \\
\hline 

Warmup & 5\% & \multicolumn{4}{c|}{0} \\
\hline 
Adam momentum $\beta_1$ / $\beta_2$ & 0.9 / 0.95 & 0.9 / 0.95 & \multicolumn{2}{c|}{-} & - \\

Averaging parameter $\alpha$ & - & 0.05 & \multicolumn{2}{c|}{0.1} & 0.05 \\
Optimism parameter $\bar\alpha$ & - & 0.05 & \multicolumn{2}{c|}{-} & 0.05 \\
\hline 

Radius $\rho_1$ / $\rho_\ell$ / $\rho_L$ & \multicolumn{2}{c|}{-} & \multicolumn{3}{c|}{- / 50 / 3000} \\
\hline

lr & $2^{-9}$ & $2^{-11}$ & $2^{-12}$ & $2^{-12}$ & $2^{-12}$ \\
\hline

\end{tabular}

$\textcolor{blue}{^1}$ Muon uses Adam for the first/last layer. 
The stepsize for the first/last layer is multiplied by $10$.

\end{table}

\end{toappendix}

\vspace{-0.5em}
\section{Conclusion}\label{sec:conclusion}
\vspace{-0.5em}
This work provides a new perspective on weight decay: beyond acting as a
regularizer, it can be understood as a form of primal averaging.
We show that carefully scheduling this averaging parameter ($\lambda_k$ in
\ref{eq:SODA}) yields acceleration in a precise theoretical sense and can lead to
practical speedups without hyperparameter tuning. 
\looseness=-1

Several directions open up.
One is applying \ref{eq:SODA} to finetuning, where standard weight decay is often
suboptimal. Since \ref{eq:SODA} regularizes with respect to the pretrained model
$z^0$ rather than the origin, it may be better suited for this setting.
Another promising direction is to develop practical accelerated instantiations
of \ref{eq:SODA}.

\section{Acknowledgments}\label{sec:acknowledgments}
This work was funded  by the Swiss National Science Foundation (SNSF) under grant number 2000-1-240094.
This work was supported by the Swiss AI Initiative (2025 Fellowship Program).
This work was supported with project ID \#37 as part of the Swiss AI Initiative, through a grant from the ETH Domain and computational resources provided by the Swiss National Supercomputing Centre (CSCS) under the Alps infrastructure.

\bibliographystyle{plainnat}
\bibliography{refs2.bib,lions-master.bib}

\begin{thebibliography}{52}
\providecommand{\natexlab}[1]{#1}
\providecommand{\url}[1]{\texttt{#1}}
\expandafter\ifx\csname urlstyle\endcsname\relax
  \providecommand{\doi}[1]{doi: #1}\else
  \providecommand{\doi}{doi: \begingroup \urlstyle{rm}\Url}\fi

\bibitem[Bauschke and Lucet(2012)]{bauschke2012fenchel}
H~Bauschke and Yves Lucet.
\newblock What is a {Fenchel} conjugate.
\newblock \emph{Notices of the AMS}, 59\penalty0 (1):\penalty0 44--46, 2012.

\bibitem[Bernstein and Newhouse(2024)]{bernstein2024old}
Jeremy Bernstein and Laker Newhouse.
\newblock Old optimizer, new norm: {An} anthology.
\newblock \emph{arXiv:2409.20325}, 2024.

\bibitem[Bernstein et~al.(2018)Bernstein, Wang, Azizzadenesheli, and Anandkumar]{bernstein2018signsgd}
Jeremy Bernstein, Yu-Xiang Wang, Kamyar Azizzadenesheli, and Animashree Anandkumar.
\newblock {signSGD}: Compressed optimisation for non-convex problems.
\newblock In \emph{International Conference on Machine Learning}, pages 560--569. PMLR, 2018.

\bibitem[Carlson et~al.(2015{\natexlab{a}})Carlson, Cevher, and Carin]{carlson2015stochastic}
David Carlson, Volkan Cevher, and Lawrence Carin.
\newblock Stochastic spectral descent for restricted boltzmann machines.
\newblock In \emph{Artificial Intelligence and Statistics}, 2015{\natexlab{a}}.

\bibitem[Carlson et~al.(2016)Carlson, Hsieh, Collins, Carin, and Cevher]{carlson2016stochastic}
David Carlson, Ya-Ping Hsieh, Edo Collins, Lawrence Carin, and Volkan Cevher.
\newblock Stochastic spectral descent for discrete graphical models.
\newblock \emph{IEEE Journal of Selected Topics in Signal Processing}, 2016.

\bibitem[Carlson et~al.(2015{\natexlab{b}})Carlson, Collins, Hsieh, Carin, and Cevher]{carlson2015preconditioned}
David~E Carlson, Edo Collins, Ya-Ping Hsieh, Lawrence Carin, and Volkan Cevher.
\newblock Preconditioned spectral descent for deep learning.
\newblock In \emph{Proceedings of the 28th International Conference on Neural Information Processing Systems}, pages 2971--2979, 2015{\natexlab{b}}.

\bibitem[Chen et~al.(2024)Chen, Liu, Liang, and Liu]{chen2023lion}
Lizhang Chen, Bo~Liu, Kaizhao Liang, and Qiang Liu.
\newblock Lion secretly solves a constrained optimization: As lyapunov predicts.
\newblock In \emph{International Conference on Learning Representations}, 2024.
\newblock URL \url{https://openreview.net/forum?id=e4xS9ZarDr}.

\bibitem[Chen et~al.(2023)Chen, Liang, Huang, Real, Wang, Pham, Dong, Luong, Hsieh, Lu, and Le]{chen2023symbolic}
Xiangning Chen, Chen Liang, Da~Huang, Esteban Real, Kaiyuan Wang, Hieu Pham, Xuanyi Dong, Thang Luong, Cho-Jui Hsieh, Yifeng Lu, and Quoc~V Le.
\newblock Symbolic discovery of optimization algorithms.
\newblock In \emph{Thirty-seventh Conference on Neural Information Processing Systems}, 2023.
\newblock URL \url{https://openreview.net/forum?id=ne6zeqLFCZ}.

\bibitem[Clarkson(2010)]{ken-fw}
Kenneth~L. Clarkson.
\newblock Coresets, sparse greedy approximation, and the {Frank-Wolfe} algorithm.
\newblock \emph{ACM Trans. Algorithms}, 2010.

\bibitem[Crawshaw et~al.(2025)Crawshaw, Modi, Liu, and Gower]{crawshaw2025exploration}
Michael Crawshaw, Chirag Modi, Mingrui Liu, and Robert~M Gower.
\newblock An exploration of non-euclidean gradient descent: Muon and its many variants.
\newblock \emph{arXiv preprint arXiv:2510.09827}, 2025.

\bibitem[Cutkosky(2019)]{cutkosky2019anytime}
Ashok Cutkosky.
\newblock Anytime online-to-batch, optimism and acceleration.
\newblock In \emph{International conference on machine learning}, pages 1446--1454. PMLR, 2019.

\bibitem[Defazio et~al.(2024)Defazio, Yang, Mehta, Mishchenko, Khaled, and Cutkosky]{defazio2024road}
Aaron Defazio, Xingyu Yang, Harsh Mehta, Konstantin Mishchenko, Ahmed Khaled, and Ashok Cutkosky.
\newblock The road less scheduled.
\newblock In \emph{Advances in Neural Information Processing Systems}, volume~37, pages 9974--10007, 2024.
\newblock \doi{10.52202/079017-0320}.
\newblock URL \url{https://proceedings.neurips.cc/paper_files/paper/2024/file/136b9a13861308c8948cd308ccd02658-Paper-Conference.pdf}.

\bibitem[Defazio et~al.(2025)Defazio, Mishchenko, Raman, Shi, and Xiao]{defazio2025smoothing}
Aaron Defazio, Konstantin Mishchenko, Parameswaran Raman, Hao-Jun~Michael Shi, and Lin Xiao.
\newblock Smoothing {DiLoCo} with primal averaging for faster training of {LLM}s.
\newblock \emph{arXiv preprint arXiv:2512.17131}, 2025.

\bibitem[Douillard et~al.(2023)Douillard, Feng, Rusu, Chhaparia, Donchev, Kuncoro, Ranzato, Szlam, and Shen]{douillard2023diloco}
Arthur Douillard, Qixuan Feng, Andrei~A Rusu, Rachita Chhaparia, Yani Donchev, Adhiguna Kuncoro, Marc'Aurelio Ranzato, Arthur Szlam, and Jiajun Shen.
\newblock Diloco: Distributed low-communication training of language models.
\newblock \emph{arXiv preprint arXiv:2311.08105}, 2023.

\bibitem[Dozat(2016)]{dozat2016incorporating}
Timothy Dozat.
\newblock Incorporating {Nesterov} momentum into {Adam}.
\newblock 2016.

\bibitem[Ferbach et~al.(2026)Ferbach, Paquette, Gidel, Everett, and Paquette]{ferbach2026logarithmic}
Damien Ferbach, Courtney Paquette, Gauthier Gidel, Katie Everett, and Elliot Paquette.
\newblock Logarithmic-time schedules for scaling language models with momentum.
\newblock \emph{arXiv preprint arXiv:2602.05298}, 2026.

\bibitem[Frank and Wolfe(1956)]{frank1956algorithm}
Marguerite Frank and Philip Wolfe.
\newblock An algorithm for quadratic programming.
\newblock \emph{Naval research logistics quarterly}, 1956.

\bibitem[Gupta et~al.(2018)Gupta, Koren, and Singer]{Gupta2018ShampooPS}
Vineet Gupta, Tomer Koren, and Yoram Singer.
\newblock Shampoo: Preconditioned stochastic tensor optimization.
\newblock In \emph{International Conference on Machine Learning}, 2018.

\bibitem[Hanson and Pratt(1988)]{hanson1988comparing}
Stephen Hanson and Lorien Pratt.
\newblock Comparing biases for minimal network construction with back-propagation.
\newblock \emph{Advances in neural information processing systems}, 1, 1988.

\bibitem[Hoffmann et~al.(2022)Hoffmann, Borgeaud, Mensch, Buchatskaya, Cai, Rutherford, de~Las~Casas, Hendricks, Welbl, Clark, et~al.]{hoffmann2022training}
Jordan Hoffmann, Sebastian Borgeaud, Arthur Mensch, Elena Buchatskaya, Trevor Cai, Eliza Rutherford, Diego de~Las~Casas, Lisa~Anne Hendricks, Johannes Welbl, Aidan Clark, et~al.
\newblock Training compute-optimal large language models.
\newblock In \emph{Advances in Neural Information Processing Systems}, 2022.

\bibitem[Jaggi(2013)]{jaggi2013revisiting}
Martin Jaggi.
\newblock Revisiting {Frank-Wolfe}: Projection-free sparse convex optimization.
\newblock In \emph{International Conference on Machine Learning}, 2013.

\bibitem[Jelassi and Defazio(2020)]{jelassi2020dual}
Samy Jelassi and Aaron Defazio.
\newblock Dual averaging is surprisingly effective for deep learning optimization.
\newblock \emph{arXiv preprint arXiv:2010.10502}, 2020.

\bibitem[Jordan et~al.(2024{\natexlab{a}})Jordan, Bernstein, Rappazzo, @fernbear.bsky.social, Vlado, Jiacheng, Cesista, Koszarsky, and @Grad62304977]{modded_nanogpt_2024}
Keller Jordan, Jeremy Bernstein, Brendan Rappazzo, @fernbear.bsky.social, Boza Vlado, You Jiacheng, Franz Cesista, Braden Koszarsky, and @Grad62304977.
\newblock modded-nanogpt: Speedrunning the nanogpt baseline, 2024{\natexlab{a}}.
\newblock URL \url{https://github.com/KellerJordan/modded-nanogpt}.

\bibitem[Jordan et~al.(2024{\natexlab{b}})Jordan, Jin, Boza, Jiacheng, Cecista, Newhouse, and Bernstein]{jordan2024muon}
Keller Jordan, Yuchen Jin, Vlado Boza, You Jiacheng, Franz Cecista, Laker Newhouse, and Jeremy Bernstein.
\newblock Muon: An optimizer for hidden layers in neural networks, 2024{\natexlab{b}}.

\bibitem[Joulani et~al.(2020)Joulani, Raj, Gyorgy, and Szepesv{\'a}ri]{joulani2020simpler}
Pooria Joulani, Anant Raj, Andras Gyorgy, and Csaba Szepesv{\'a}ri.
\newblock A simpler approach to accelerated optimization: iterative averaging meets optimism.
\newblock In \emph{International conference on machine learning}, pages 4984--4993. PMLR, 2020.

\bibitem[Kallusky et~al.(2025)Kallusky, Rao, Nandavanam, and Shi]{kallusky2025snoo}
Dominik Kallusky, Vinay Rao, Vishal Nandavanam, and Hao-Jun~Michael Shi.
\newblock {SNOO}: Step-k {Nesterov} outer optimizer-the surprising effectiveness of {Nesterov} momentum applied to pseudo-gradients.
\newblock \emph{arXiv preprint arXiv:2510.15830}, 2025.

\bibitem[Kavis et~al.(2019)Kavis, Levy, Bach, and Cevher]{kavis2019unixgrad}
Ali Kavis, Kfir~Y Levy, Francis Bach, and Volkan Cevher.
\newblock Unixgrad: A universal, adaptive algorithm with optimal guarantees for constrained optimization.
\newblock \emph{Advances In Neural Information Processing Systems 32 (Nips 2019)}, 32\penalty0 (CONF), 2019.

\bibitem[Kingma and Ba(2014)]{kingma2014adam}
DP~Kingma and Jimmy Ba.
\newblock {Adam}: A method for stochastic optimization.
\newblock In \emph{International Conference on Learning Representations}, 2014.

\bibitem[Kunstner et~al.(2023)Kunstner, Chen, Lavington, and Schmidt]{kunstner2023noise}
Frederik Kunstner, Jacques Chen, Jonathan~Wilder Lavington, and Mark Schmidt.
\newblock Noise is not the main factor behind the gap between sgd and adam on transformers, but sign descent might be.
\newblock \emph{arXiv preprint arXiv:2304.13960}, 2023.

\bibitem[Lan(2012)]{lan2012optimal}
Guanghui Lan.
\newblock An optimal method for stochastic composite optimization.
\newblock \emph{Mathematical Programming}, 133\penalty0 (1):\penalty0 365--397, 2012.

\bibitem[Loshchilov and Hutter(2019)]{loshchilov2017decoupled}
Ilya Loshchilov and Frank Hutter.
\newblock Decoupled weight decay regularization.
\newblock In \emph{International Conference on Learning Representations}, 2019.

\bibitem[Mohri and Yang(2016)]{mohri2016accelerating}
Mehryar Mohri and Scott Yang.
\newblock Accelerating online convex optimization via adaptive prediction.
\newblock In \emph{Artificial Intelligence and Statistics}, pages 848--856. PMLR, 2016.

\bibitem[Mokhtari et~al.(2020)Mokhtari, Hassani, and Karbasi]{mokhtari2020stochastic}
Aryan Mokhtari, Hamed Hassani, and Amin Karbasi.
\newblock Stochastic conditional gradient methods: From convex minimization to submodular maximization.
\newblock \emph{Journal of Machine Learning Research}, 2020.

\bibitem[Morwani et~al.(2025)Morwani, Vyas, Zhang, and Kakade]{morwani2025connections}
Depen Morwani, Nikhil Vyas, Hanlin Zhang, and Sham Kakade.
\newblock Connections between schedule-free optimizers, {AdEMAMix}, and accelerated sgd variants.
\newblock \emph{arXiv preprint arXiv:2502.02431}, 2025.

\bibitem[Nesterov(2005)]{nesterov2005smooth}
Yu~Nesterov.
\newblock Smooth minimization of non-smooth functions.
\newblock \emph{Mathematical programming}, 103:\penalty0 127--152, 2005.

\bibitem[Nesterov(2012)]{nesterov2012efficiency}
Yu~Nesterov.
\newblock Efficiency of coordinate descent methods on huge-scale optimization problems.
\newblock \emph{SIAM Journal on Optimization}, 22\penalty0 (2):\penalty0 341--362, 2012.

\bibitem[Nesterov and Shikhman(2015)]{nesterov2015quasi}
Yu~Nesterov and Vladimir Shikhman.
\newblock Quasi-monotone subgradient methods for nonsmooth convex minimization.
\newblock \emph{Journal of Optimization Theory and Applications}, 165\penalty0 (3):\penalty0 917--940, 2015.

\bibitem[Nesterov(2009)]{nesterov2009primal}
Yurii Nesterov.
\newblock Primal-dual subgradient methods for convex problems.
\newblock \emph{Mathematical programming}, 120\penalty0 (1):\penalty0 221--259, 2009.

\bibitem[Orabona(2019)]{Orabona2019}
Francesco Orabona.
\newblock A modern introduction to online learning.
\newblock \emph{CoRR}, abs/1912.13213, 2019.
\newblock URL \url{http://arxiv.org/abs/1912.13213}.

\bibitem[Pagliardini et~al.(2024)Pagliardini, Ablin, and Grangier]{ademamix2024}
Matteo Pagliardini, Pierre Ablin, and David Grangier.
\newblock The {AdEMAMix} optimizer: Better, faster, older.
\newblock \emph{arXiv preprint arXiv:2409.03137}, 2024.

\bibitem[Pethick et~al.(2025{\natexlab{a}})Pethick, Xie, Antonakopoulos, Zhu, Silveti-Falls, and Cevher]{pethick2025trainingdeeplearningmodels}
Thomas Pethick, Wanyun Xie, Kimon Antonakopoulos, Zhenyu Zhu, Antonio Silveti-Falls, and Volkan Cevher.
\newblock Training deep learning models with norm-constrained {LMOs}.
\newblock In \emph{International Conference on Machine Learning}, 2025{\natexlab{a}}.

\bibitem[Pethick et~al.(2025{\natexlab{b}})Pethick, Xie, Erdogan, Antonakopoulos, Silveti-Falls, and Cevher]{pethick2025generalized}
Thomas Pethick, Wanyun Xie, Mete Erdogan, Kimon Antonakopoulos, Tony Silveti-Falls, and Volkan Cevher.
\newblock Generalized gradient norm clipping \& non-euclidean {$(L_0, L_1)$}-smoothness.
\newblock \emph{arXiv preprint arXiv:2506.01913}, 2025{\natexlab{b}}.

\bibitem[Qiu et~al.(2025)Qiu, Chen, Phan, Lei, and Wilson]{qiu2025hyperparameter}
Shikai Qiu, Zixi Chen, Hoang Phan, Qi~Lei, and Andrew~Gordon Wilson.
\newblock Hyperparameter transfer enables consistent gains of matrix-preconditioned optimizers across scales.
\newblock \emph{arXiv preprint arXiv:2512.05620}, 2025.

\bibitem[Rakhlin and Sridharan(2013)]{rakhlin2013online}
Alexander Rakhlin and Karthik Sridharan.
\newblock Online learning with predictable sequences.
\newblock In \emph{Conference on Learning Theory}, pages 993--1019. PMLR, 2013.

\bibitem[Scetbon et~al.(2025)Scetbon, Ma, Gong, and Meeds]{scetbon2025gradient}
Meyer Scetbon, Chao Ma, Wenbo Gong, and Edward Meeds.
\newblock Gradient multi-normalization for stateless and scalable {LLM} training.
\newblock \emph{arXiv preprint arXiv:2502.06742}, 2025.

\bibitem[Schaipp et~al.(2025)Schaipp, H{\"a}gele, Taylor, Simsekli, and Bach]{schaipp2025surprising}
Fabian Schaipp, Alexander H{\"a}gele, Adrien Taylor, Umut Simsekli, and Francis Bach.
\newblock The surprising agreement between convex optimization theory and learning-rate scheduling for large model training.
\newblock \emph{arXiv preprint arXiv:2501.18965}, 2025.

\bibitem[Tseng(2008)]{tseng2008accelerated}
Paul Tseng.
\newblock On accelerated proximal gradient methods for convex-concave optimization.
\newblock \emph{submitted to SIAM Journal on Optimization}, 2\penalty0 (3), 2008.

\bibitem[Xiao(2024)]{xiao2024rethinking}
Lechao Xiao.
\newblock Rethinking conventional wisdom in machine learning: From generalization to scaling.
\newblock \emph{arXiv preprint arXiv:2409.15156}, 2024.

\bibitem[Xie and Li(2024)]{xie2024implicit}
Shuo Xie and Zhiyuan Li.
\newblock Implicit bias of {AdamW}: $\ell_\infty$ norm constrained optimization.
\newblock \emph{arXiv preprint arXiv:2404.04454}, 2024.

\bibitem[Xie et~al.(2025)Xie, Wei, Cao, Zhao, Deng, Li, Dai, Gao, Chang, Zhao, et~al.]{xie2025mhc}
Zhenda Xie, Yixuan Wei, Huanqi Cao, Chenggang Zhao, Chengqi Deng, Jiashi Li, Damai Dai, Huazuo Gao, Jiang Chang, Liang Zhao, et~al.
\newblock {mHC}: Manifold-constrained hyper-connections.
\newblock \emph{arXiv preprint arXiv:2512.24880}, 2025.

\bibitem[Yang et~al.(2021)Yang, Hu, Babuschkin, Sidor, Liu, Farhi, Ryder, Pachocki, Chen, and Gao]{yang2022tensor}
Ge~Yang, Edward Hu, Igor Babuschkin, Szymon Sidor, Xiaodong Liu, David Farhi, Nick Ryder, Jakub Pachocki, Weizhu Chen, and Jianfeng Gao.
\newblock Tuning large neural networks via zero-shot hyperparameter transfer.
\newblock In \emph{Advances in Neural Information Processing Systems}, 2021.

\bibitem[Zhang et~al.(2019)Zhang, Lucas, Hinton, and Ba]{zhangLookaheadOptimizerSteps2019}
Michael~R. Zhang, James Lucas, Geoffrey Hinton, and Jimmy Ba.
\newblock Lookahead {{Optimizer}}: K steps forward, 1 step back.
\newblock \emph{arXiv}, December 2019.

\end{thebibliography}

\newpage
\appendix
\onecolumn

\begin{center}
\vspace{7pt}
{\Large \fontseries{bx}\selectfont Appendix}
\end{center}

\renewcommand{\contentsname}{Table of Contents}
\etocdepthtag.toc{mtappendix}
\etocsettagdepth{mtchapter}{none}
\etocsettagdepth{mtappendix}{section}
\tableofcontents

\clearpage

\nosectionappendix

\end{document}